\definecolor{redlinkcolor}{rgb}{0.79607843, 0.25098039, 0.25882353}
\definecolor{bluecitecolor}{rgb}{0,0.36,0.69}
\newtheorem{definition}{Definition}
\newtheorem{assumption}{Assumption}
\newtheorem{theorem}{Theorem}
\newtheorem*{theorem*}{Theorem}
\newtheorem{lemma}{Lemma}
\newtheorem*{lemma*}{Lemma}
\newtheorem{corollary}{Corollary}
\newtheorem*{corollary*}{Corollary}
\newcommand{\name}{\texttt{DP$^2$}\xspace}
\title{Differentially Private Adaptive Optimization with Delayed Preconditioners}
\author{%
  Tian Li$^{\spadesuit}$, Manzil Zaheer$^\heartsuit$, Ziyu Liu$^\spadesuit$, Sashank Reddi$^\diamondsuit$, Brendan McMahan$^\diamondsuit$, Virginia Smith$^{\spadesuit}$\\
 $^\spadesuit$Carnegie Mellon University, $^\heartsuit$Google DeepMind, $^\diamondsuit$Google Research\\
\texttt{\{litian,ziyuliu,smithv\}@cs.cmu.edu,} \\ 
\texttt{\{manzilzaheer,sashank,mcmahan\}@google.com}\\
}
\begin{document}

\maketitle

\vspace{-1em}
\begin{abstract}
\noindent Privacy noise may negate the benefits of using adaptive optimizers in differentially private model training. Prior works typically address this issue by using auxiliary information (e.g., public data) to boost the effectiveness of adaptive optimization.  
In this work, we explore techniques to estimate and efficiently adapt to gradient geometry in private adaptive optimization \textit{without  auxiliary data}. Motivated by the observation that adaptive methods can tolerate stale preconditioners, we propose \underline{d}ifferentially \underline{p}rivate adaptive training with \underline{d}elayed \underline{p}reconditioners (\name), a simple method that constructs delayed but less noisy preconditioners to better realize the benefits of adaptivity. Theoretically, we provide convergence guarantees for our method for both convex and non-convex problems, and analyze trade-offs between delay and privacy noise reduction. Empirically, we explore \name across several real-world datasets, demonstrating that it can improve convergence speed by as much as 4$\times$ relative to non-adaptive baselines and match the performance of state-of-the-art optimization methods that require auxiliary data.    
\end{abstract}

\vspace{-.5em}
\section{Introduction} \label{sec:intro}
\vspace{-.5em}

Adaptive optimizers such as AdaGrad~\citep{colt/McMahanS10,duchi2011adaptive} and RMSProp~\citep{hinton2012rmsprop}
are commonly used to improve convergence speed in machine learning training.
However, in privacy-sensitive applications, the benefits of adaptivity may degrade as a result of noise added to the preconditioners to guarantee differential privacy~\citep{li2022private}. 
Prior works typically address this issue by using non-sensitive auxiliary data to approximate the underlying structures of private gradients~\citep{asi2021private,kairouz2021nearly, li2022private}. 
While this can boost performance, assuming access to informative public data may be unrealistic in many privacy-sensitive applications.
In this work, we instead ask: \textbf{Can we improve privacy/utility trade-offs in private adaptive optimization \textit{without} accessing auxiliary data}?

\begin{wrapfigure}{r}[0pt]{0.32\textwidth}
  \centering
  \vspace{-4mm}
  \includegraphics[width=\linewidth]{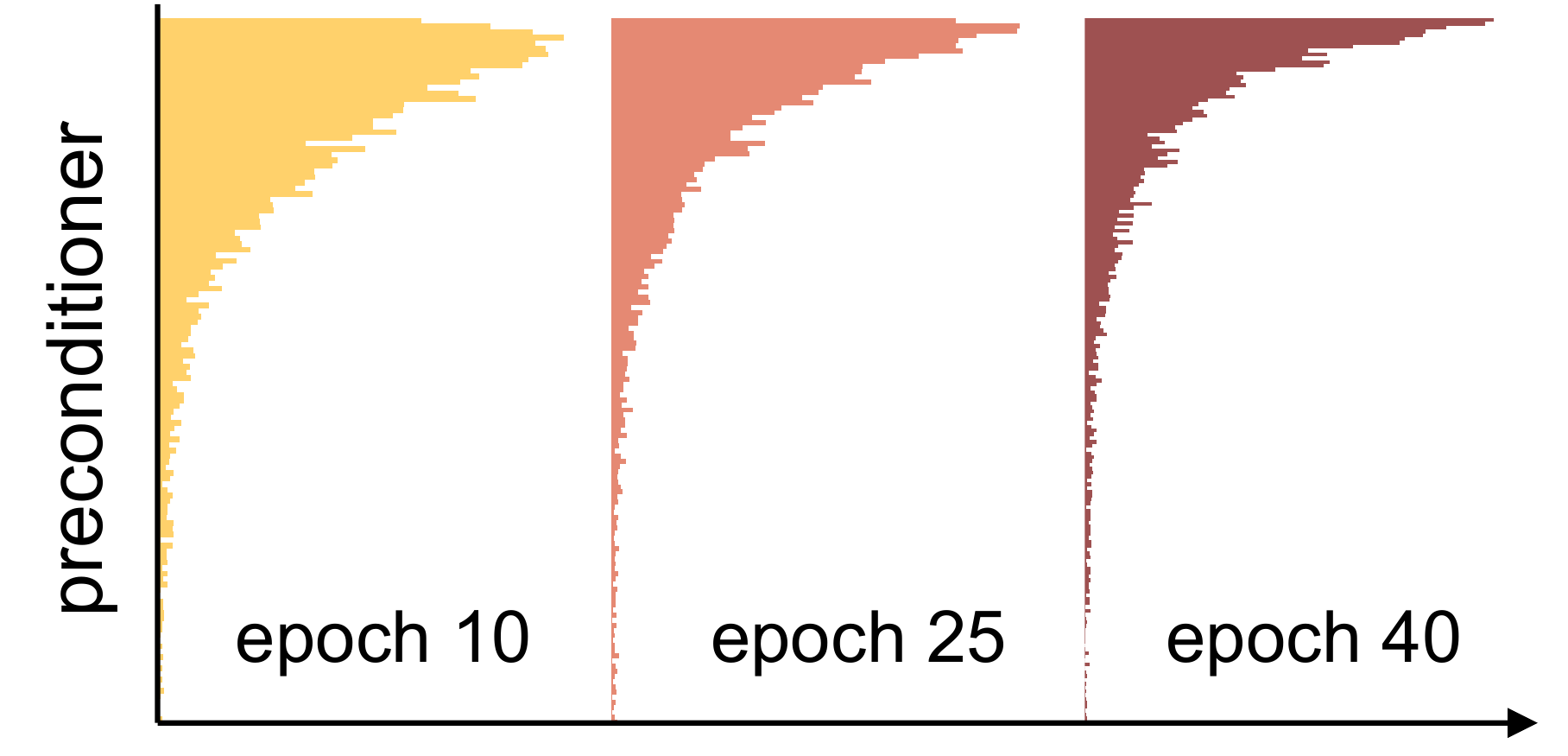}
  \vspace{-6mm}
  \caption{Preconditioner values do not change drastically during optimization (IMDB dataset). 
  }
  \label{fig:preconditioner_non_private}
  \vspace{-4mm}
\end{wrapfigure}
A key insight we have in addressing this question is that for many machine learning problems, the gradient geometry may not change drastically during successive steps of optimization (e.g., see Figure~\ref{fig:preconditioner_non_private}, which plots successive distributions of preconditioner values).
This presents an opportunity to estimate the  preconditioners used by adaptive optimizers with smaller noise, by averaging across previous iterates. 
To this end, we propose \name, a \underline{d}ifferentially \underline{p}rivate adaptive method 
that uses historical gradients to construct \underline{d}elayed \underline{p}reconditioners with reduced noise. 
Despite the simplicity of this approach, 
we find that it can significantly improve performance in practice---improving convergence speed by as much as 4$\times$ relative to non-adaptive baselines, all without the need to access auxiliary data. To better understand these performance gains, we theoretically and empirically analyze the method to study the effect of using delayed preconditioners, including trade-offs that emerge between the noise reduction and staleness.

\textbf{Contributions.} We propose \name as a method for differentially private adaptive optimization with delayed preconditioners. Unlike prior work, \name  does not rely on auxiliary data to improve privacy/utility trade-offs in private training. We provide convergence guarantees for \name in both convex and non-convex settings, and analyze the trade-offs between delay and privacy noise. We conduct extensive experiments to showcase the effectiveness of \name, which can significantly improve model utility for a given privacy budget across text and recommendation benchmarks.

\vspace{-.5em}
\section{Background and Related Work} \label{sec:related_work}
\vspace{-.5em}

In this section we discuss closely related works and set up some preliminaries. We start by discussing prior work in differentially private optimization, considering the classic framework of $(\varepsilon, \delta)$-differential privacy (DP)~\citep{dwork2006calibrating}, defined as follows.

\begin{definition}[Differential privacy~\citep{dwork2006calibrating}]\label{def:dp}
A randomized algorithm $\mathcal{M}$ is $(\varepsilon, \delta)$-differentially private if for all neighboring datasets $D, D'$ differing by one element, and every possible subset of outputs $O$,
\begin{align*}
    \Pr\left(\mathcal{M}(D) \in O\right) \leq e^{\varepsilon} \Pr\left(\mathcal{M}(D') \in O\right) + \delta.
\end{align*}
\end{definition}
\vspace{-0.8em}

\textbf{Differentially Private SGD.} Informally, DP in machine learning offers protection by masking the influence of individual examples (example-level DP, e.g. \citep{song2013stochastic,bassily2014private,abadi2016deep}) or all of the examples from one user (user-level DP, e.g. \citep{mcmahan2017learning,kairouz2021practical}) on the trained model. In this work, we consider example-level DP using the popular subsampled Gaussian mechanism~\citep{dwork2014algorithmic,mironov2019r} to perturb gradients to ensure DP.  
Unless much larger batch sizes and possibly larger datasets are used, DP mechanisms often lead to a significant utility drop. Extensive research has thus been devoted to investigating improved privacy/utility/computation trade-offs for DP-SGD, including various training techniques (e.g., data augmentation and large-batch training)~\citep{de2022unlocking}, leveraging public data~\citep{amid2021public,zhou2020bypassing}, and releasing gradient statistics via tree aggregation to reduce the amount of noise~\citep{kairouz2021practical,denisov2022improved,chan2011private}. These prior works are orthogonal to and could be applied in conjunction with our proposed method, which focuses specifically on privacy in the context of adaptive optimization.

\textbf{Differentially Private Adaptive Optimization.} To reduce privacy cost in iterative DP algorithms, it is natural to consider applying adaptive optimizers (e.g., AdaGrad~\citep{colt/McMahanS10,duchi2011adaptive}, RMSProp~\citep{hinton2012rmsprop}, AMSGrad~\citep{reddi2019convergence}, and Yogi~\citep{zaheer2018adaptive}) to speed up convergence. A straightforward approach is to first privatize mini-batch gradients and then plug in noisy gradients to any adaptive updating rules~\citep{zhou2020private}. However, estimating gradient moments in this way may yield preconditioners with too much noise, resulting in adaptive methods that may not have meaningful improvements over DP-SGD~\citep{li2022private}. As we discuss in Section~\ref{sec:intro}, more recent works suggest the use of non-sensitive public information to estimate the preconditioners (or other gradient structures)~\citep{li2022private,kairouz2021nearly,asi2021private}, which may not always be available in practice. In Section~\ref{sec:experiments:baselines}, we empirically benchmark two baselines along this line of work and demonstrate that \name can perform comparably to these state-of-the-art methods, even though it does \textit{not} require access to auxiliary data. 
Finally, we note that previous works have explored the high-level direction of delayed preconditioners, but mainly as a compromise for computational considerations in non-private training~\citep{gupta2018shampoo}. In this work, we instead show that staleness can be leveraged to improve privacy/utility trade-offs in private adaptive optimization, and propose and analyze a novel method for delaying preconditioner computation in the context of private training.

\textbf{Notation.} In this work, we consider using adaptive optimization methods to solve the classic empirical risk minimization objective, i.e., $\min_w ~F(w) ~=~ \frac{1}{n} \sum_{i=1}^n f(x^i;w)$,
where $w \in \mathbb{R}^d$ and $\{f(x^i; w)\}_{i \in [n]}$ are individual loss functions on training sample $i \in [n]$. For vectors $u, v \in \mathbb{R}^d$, we use $u+v$ for coordinate-wise addition, and $\frac{u}{v}$ for coordinate-wise division. 
For any vector $v$,  $v_j$ denotes the $j$-th coordinate of $v$. For example, $g^{i,t}_{j}$ refers to the $j$-th coordinate of gradient $g^{i,t}$. Finally, $\left|v\right| \in \mathbb{R}^d$ denotes taking coordinate-wise absolute values, and $\|\cdot\|_M$ denotes the matrix norm defined as $\|\cdot\|_{M} := \sqrt{\langle \cdot, M \cdot\rangle}$ for a symmetric and positive definite matrix $M \in \mathbb{R}^{d\times d}$, or a diagonal matrix with non-negative diagonal entries populated by a vector $M \in \mathbb{R}^d$. 

\vspace{-0.4em}
\section{\texorpdfstring{\name}{DP2}: Delayed Preconditioners for Differentially Private Adaptive Optimization} \label{sec:methods}
\vspace{-0.4em}

We now introduce our \name framework. 
While we discuss \name in the context of a particular adaptive method (RMSProp), we note that the approach is method-agnostic in that it can generally be applied to any private adaptive optimization method where preconditioners are calculated at each iteration. As an initial step towards understanding the algorithm, we first investigate the effects of delayed preconditioners in non-private training in Section~\ref{sec:methods:0}. We then explain how to apply this idea to construct less noisy preconditioners from prior gradients in private training in Section~\ref{sec:methods:1}. 

\vspace{-.2em}
\subsection{Delayed preconditioners in non-private settings} \label{sec:methods:0}
\vspace{-.2em}

Adaptive methods use preconditioners to adapt to gradient geometry, effectively resulting in coordinate-wise learning rates. This can be advantageous for many applications, especially those with sparse gradients or non-uniform stochastic noise~\citep[e.g.,][]{zhang2020adaptive,reddi2020adaptive,hinton2012rmsprop,colt/McMahanS10}.
One of the key design choices of \name is to update preconditioners less frequently and use the average of past gradients to reduce noise. Our observation is that a wide range of learning problems are tolerant to the staleness of preconditioners. In this subsection, we validate this empirically on the benchmark datasets considered throughout this paper.

There are potentially many ways that one could instantiate the idea of delayed preconditioner computation in adaptive optimization. Here we consider a specific algorithm, which is the exact non-private version of our proposed \name framework (Algorithm~\ref{alg:dp2}) introduced in later sections. The basic idea is to alternate between $s$ steps of SGD and $s$ steps of an adaptive method (for simplicity we assume RMSProp as the adaptive algorithm), where $s$ is a constant larger than 1. Each time we switch from SGD to RMSProp, we average $s$ past SGD gradients and use the average to update the preconditioner. The preconditioner will be used in subsequent RMSProp updates (thus being stale). As motivation for \name, we empirically show that RMSProp with delayed preconditioners achieves almost the same optimization performance as RMSProp (Figure~\ref{fig:non-private-delay}). 

\begin{figure}[h!]
    \centering
    \vspace{-1em}
    \includegraphics[width=0.3\linewidth]{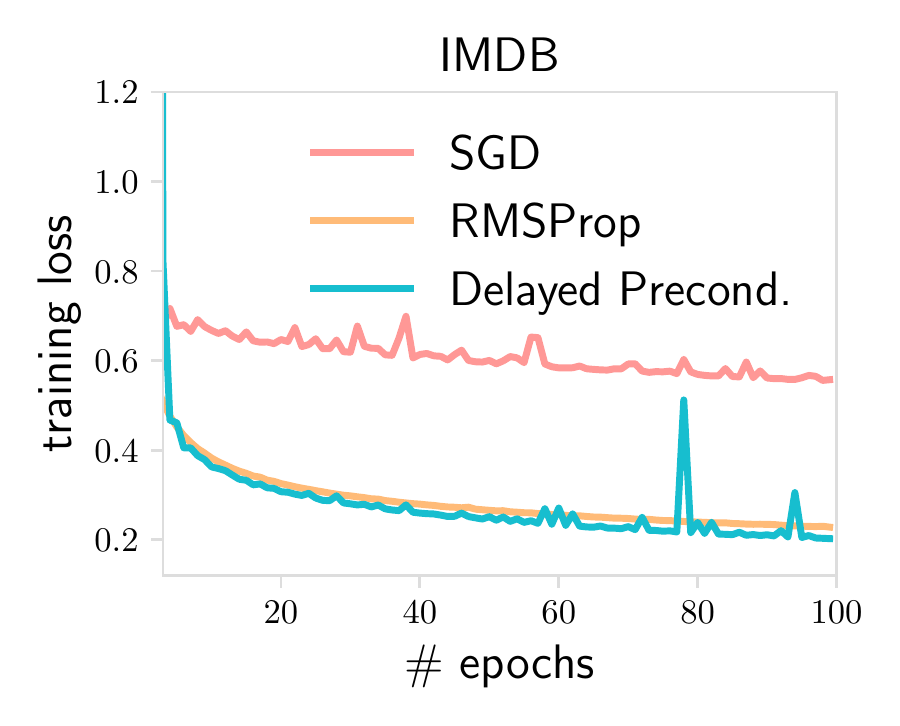}
    \includegraphics[width=0.3\linewidth]{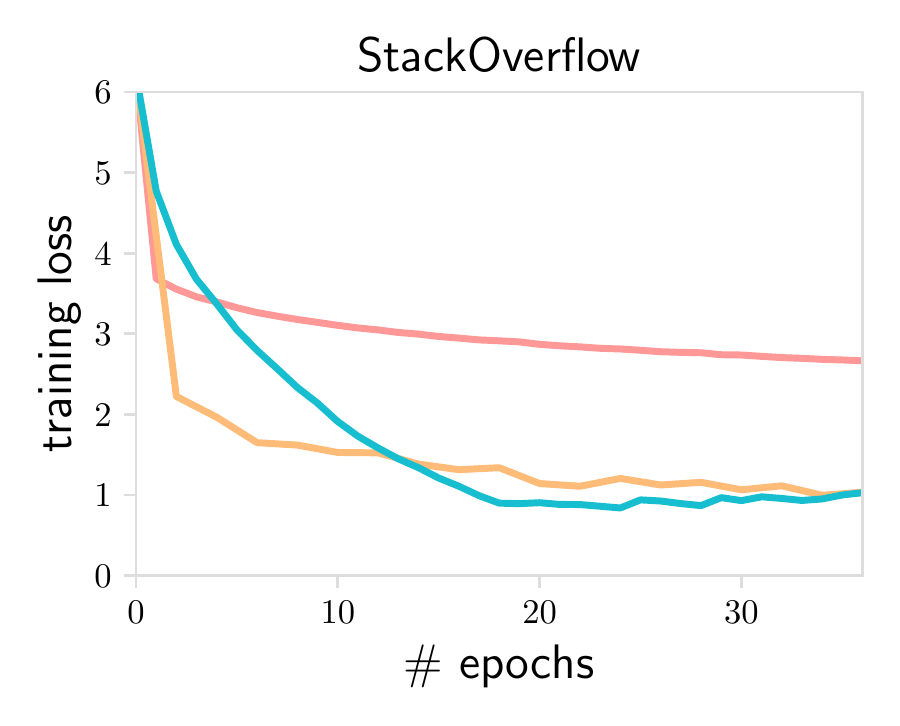}
    \includegraphics[width=0.3\linewidth]{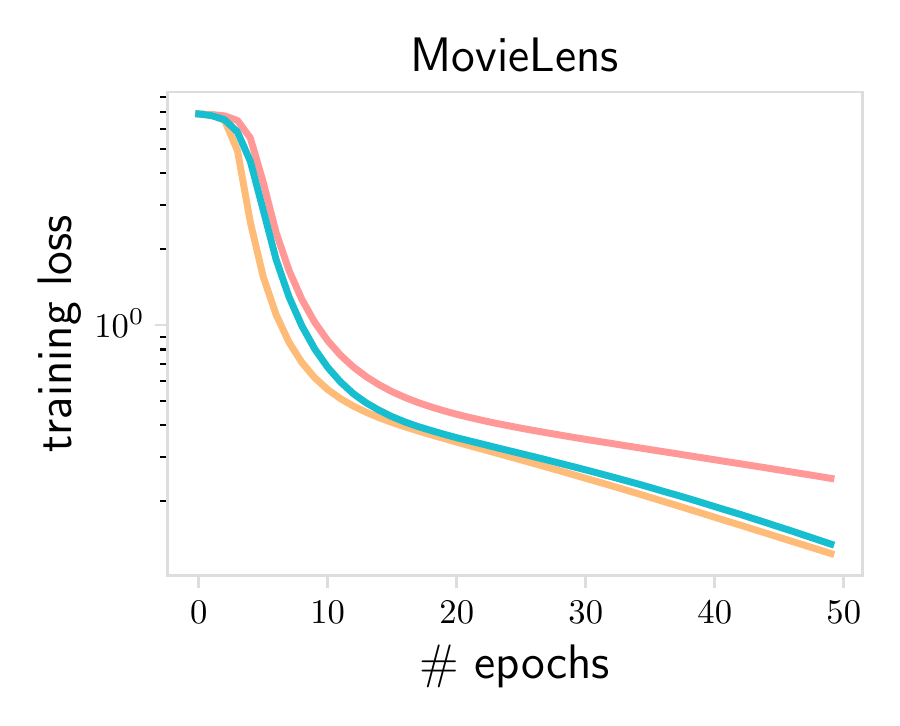}
    \vspace{-1em}
    \caption{In non-private training, RMSProp with delayed preconditioners achieves similar training loss as standard RMSProp across all datasets. Final test accuracies are presented in Section~\ref{sec:experiments:main}. This observation provides motivation for our proposed \name framework for private training (Section~\ref{sec:methods:1}).}
    \label{fig:non-private-delay}
\end{figure}

As discussed in Section~\ref{sec:related_work}, we note that the idea of delayed preconditioning has been briefly discussed in prior work~\citep{gupta2018shampoo} for the purpose of speeding up the computation of adaptive optimization in non-private training. Unlike this prior work, we focus on the goal of reducing noise in private training, propose an alternative method for using stale preconditioners that is more amenable to differential privacy, and analyze our method in both convex and non-convex settings.

\vspace{-.2em}
\subsection{Constructing delayed preconditioners with reduced noise} \label{sec:methods:1}
\vspace{-.2em}

Without access to public data or other side information, prior works typically update preconditioners based on noisy gradients at each iteration~\citep{zhou2020private}. For instance, a natural way to privatize RMSProp is to update the preconditioner $v \in \mathbb{R}^d$ as $v \gets \beta v + (1-\beta)(\Tilde{g})^2$ where $\beta \in (0,1)$ is a moving average constant, and $\Tilde{g} \in \mathbb{R}^d$ is the noisy gradient output by some standard privacy mechanism (e.g., the Gaussian mechanism).\footnote{We consider the practical diagonal (as opposed to matrix) form of adaptive methods throughout the paper.} However, a drawback to this is that the noise gets accumulated at each iteration, making adaptive methods significantly less effective~\citep{li2022private}. 

Inspired by the observation that problems can be tolerant to the staleness of preconditioners (Figure~\ref{fig:non-private-delay}), we propose to update the preconditioners less frequently to reduce noise. For instance, we update $v$ every $s$ steps using some aggregate function of $s$ recent private gradients from DP-SGD. During iterations where $v$ is not updated, we simply apply the most recent (stale) $v$ to precondition the gradients. In order to mitigate the noise, we \textit{average} over these $s$ gradients to form a pseudo-gradient $g$, which can be plugged into arbitrary adaptive optimization algorithms. Note that the privacy noise variance will be reduced  $s$ times if we average $s$ Gaussian random variables (i.e., the DP noise).

\setlength{\textfloatsep}{5pt}
\begin{algorithm}[t]
\SetAlgoLined
\DontPrintSemicolon
\SetNoFillComment
\setlength{\abovedisplayskip}{2pt}
\setlength{\belowdisplayskip}{2pt}
\setlength{\abovedisplayshortskip}{2pt}
\setlength{\belowdisplayshortskip}{2pt}
\KwIn{$T$, batch size $b$, noise multiplier $\sigma$, clipping thresholds $C$, initial model $w^0 \in \mathbb{R}^d$, $v=\mathbf{0}$, constant $\epsilon \in \mathbb{R}_+$, learning rate schedule $\alpha^t$, moving average parameter $\beta$, SGD cumulative aggregation step $s_1$, RMSProp cumulative step $s_2$}
\caption{{\name-RMSprop: Delayed Preconditioners for Differentially Private RMSprop }}
\label{alg:dp2}
    \For{$t=0, \cdots, T-1$}{
        \If{$ t \ \mathrm{mod} \ (s_1+s_2) = 0$}{
            Reset accumulator
        $G^t \gets \mathbf{0}$ \;
        }
        \If{$ t \ \mathrm{mod} \ (s_1+s_2) = s_1$}
        {
        Update moment estimates as
            $v \gets \beta v + (1-\beta) \left(G^t/s_1\right)^2$ \;
        Reset accumulator
        $G^t \gets \mathbf{0}$ \;
        }
        Uniformly randomly sample a mini-batch $B$ with size $b$ from private training data \; 
        Get individual gradients for sample $i \in B$:
            $g^{i,t} \gets \nabla f(x^i; w^t)$ \;
        Privatize the (preconditioned) gradients using the Gaussian mechanism:
        \begin{align*}
            \Tilde{g}^t \gets \frac{1}{b} \left(\sum_{i \in B} \text{clip} \left(\frac{g^{i,t}}{D^t}, C\right)  +  \mathcal{N}\left(\mathbf{0}, \sigma^2 C^2\right)\right)
        \end{align*}
        \begin{flalign*}
        \text{where} && D^t \gets \begin{cases}
              \mathbf{1}  & \text{if } t \ \mathrm{mod} \ (s_1+s_2) < s_1 \\
              \sqrt{v}+\epsilon & \text{otherwise.}
            \end{cases} &&
        \end{flalign*} \;
        Accumulate the private gradients $\tilde{g}^t$ : 
        $G^{t+1} \gets G^t + \tilde{g}^t $ \;
        Update model parameters $w$:
        \begin{align*}
           w^{t+1} \gets w^t - \alpha^t \Tilde{g}^t
        \end{align*} }
    \Return{$w^T$} 
\end{algorithm}

\name is summarized in Algorithm~\ref{alg:dp2}. For simplicity of presentation, we assume RMSProp as the adaptive method (denoted as \name-RMSProp) throughout this section. However, our framework can be generally applied to other common adaptive methods (see Appendices~\ref{supp:sec:adagrad} and~\ref{supp:sec:algorithms}). The high-level idea is to alternate between $s_1$ steps of private SGD and $s_2$ private RMSProp steps, and use averages of $s_1$ SGD gradients (i.e., average of the accumulator $G \in \mathbb{R}^d$) to update the preconditioner $v$. Next, we discuss some key components of our algorithm.

\textbf{Order of privatization and preconditioning.} Given a private preconditioner $v$, there are generally two choices to perform adaptive optimization over the raw gradients $\{g^{i, t}\}_{i \in B}$ generated from mini-batch $B$ at the $t$-th iteration. 
\begin{enumerate}[leftmargin=*]
    \item First privatize gradients with clipping threshold $C_1$, then precondition noisy gradients with $\sqrt{v}+\epsilon$ where $\epsilon$ is a small constant: 
        \begin{align*}
            \Tilde{g}^t \gets \frac{1}{b} \left(\sum_{i \in B} \text{clip} \left(g^{i,t}, C_1 \right)  +  \mathcal{N}\left(\mathbf{0}, \sigma^2 C_1^2\right)\right) / \left(\sqrt{v}+\epsilon\right)
        \end{align*}
    \vspace{-0.3em}
    \item First precondition gradients with $\sqrt{v}+\epsilon$, then privatize the output with clipping threshold $C_2$:
    \vspace{-0.3em}
        \begin{align*}
            \Tilde{g}^t \gets \frac{1}{b}\left( \sum_{i \in B} \text{clip} \left({g}^{i,t}/\left(\sqrt{v}+\epsilon\right), C_2\right)  + \mathcal{N}\left(\mathbf{0}, \sigma^2 C_2^2\right)\right)
        \end{align*}
\end{enumerate}

The difference is that the privacy noise in the first choice may be scaled in an undesired direction, as $\frac{\mathcal{N}(\mathbf{0}, \sigma^2 C^2)}{\sqrt{v}+\epsilon}$ with a \textit{less noisy} estimated $\sqrt{v}$ (perfect estimation removing all privacy noise in the extreme case) would amplify the noise $\mathcal{N}(\mathbf{0}, \sigma^2 C^2)$ on informative coordinates (i.e., coordinates with smaller preconditioner values), which is consistent with the argument made in~\citet{li2022private}. We empirically compare the two options and show that the latter gives better performance (Section~\ref{sec:experiments:ablation}). 

It is critical to \textit{average} noisy gradients to construct a cleaner estimate of the preconditioner (Line 5 and 10 in Algorithm~\ref{alg:dp2}) and apply it for adaptive optimization (Line 9). As these two steps access raw gradients twice, we need to privatize them separately. Unfortunately, the privacy budget would accumulate with each query to the raw training data. Hence, we use the private SGD gradients for both the model update and the preconditioner estimation. This results in a hybrid method that alternates between private SGD and private adaptive optimization steps.  Note that to get an unbiased estimate of the true delayed preconditioners, we can correct the bias in $(G^t/s_1)^2$ (Line 5) by subtracting the privacy noise variance term $\frac{\sigma^2 C^2}{s_1 b^2}$ out of $(G^t/s_1)^2$. But this value is usually very small and negligible in practice.
While in principle, non-adaptive and adaptive updates can take different numbers of consecutive iterations, in our empirical evaluation, we simply set $s_1=s_2$, and find that this works reasonably well across all datasets (Section~\ref{sec:experiments}).

\vspace{-.1in}
\paragraph{Privacy guarantees.} From Algorithm~\ref{alg:dp2}, we see that at each iteration, we access raw data and pass them through the privacy barrier \textit{once} (Line 9) to generate private gradients $\Tilde{g}^t$ with the same noise multiplier $\sigma$ and batch size $b$, and the preconditioner only accumulates already differentially private gradients. Since the final model is a composition of these private releases (noisy gradients), Algorithm~\ref{alg:dp2} (or \name in general) achieves the same privacy guarantees as standard DP-SGD training under the same training settings. For completeness, we formally state the privacy guarantee below.

\vspace{.05in}
\begin{theorem}[Privacy guarantee of Algorithm~\ref{alg:dp2}~\citep{abadi2016deep}] There exist constants $c_1$ and $c_2$ such that for any $\varepsilon < c_1 b^2 T/n^2$, Algorithm~\ref{alg:dp2} is $(\varepsilon, \delta)$-differentially private for any $\delta > 0$ if $\sigma \geq c_2 \frac{b\sqrt{T \log(1/\delta)}}{n\varepsilon}$.
\end{theorem}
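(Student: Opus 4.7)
The plan is to reduce the analysis to the standard moments accountant guarantee of \citet{abadi2016deep} by showing that Algorithm~\ref{alg:dp2} accesses the sensitive dataset only through a sequence of $T$ subsampled Gaussian mechanism releases with identical noise multiplier $\sigma$, subsampling rate $b/n$, and $\ell_2$-sensitivity $C$. Everything else in the algorithm (the preconditioner update, the choice of $D^t$, the accumulator $G^t$, and the model update) is post-processing of these releases.

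First I would formally identify the ``privacy barrier.'' The only line where raw training examples enter the computation is Line 9, where per-example gradients $g^{i,t}$ are formed, divided by $D^t$, clipped to norm $C$, summed over the random mini-batch $B$, and perturbed by $\mathcal{N}(\mathbf{0}, \sigma^2 C^2 I)$. I would then argue inductively over $t$ that $D^t$ is a deterministic function of the previously released $\{\tilde g^0, \dots, \tilde g^{t-1}\}$ together with public quantities: when $t \bmod (s_1+s_2) < s_1$ it is the constant $\mathbf{1}$, and otherwise it is $\sqrt{v}+\epsilon$ where $v$ was most recently updated (Line 5) from $G^t/s_1$, which is an accumulation of previously released $\tilde g^{t'}$'s. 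Hence $D^t$ is DP by post-processing and can be treated as a public, data-independent scaling inside the clip.

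With $D^t$ treated as public, the per-example contribution $\mathrm{clip}(g^{i,t}/D^t, C)$ has $\ell_2$-sensitivity exactly $C$ (changing a single example in the neighboring dataset changes the sum by at most $C$), so each release $\tilde g^t$ is a single invocation of the subsampled Gaussian mechanism with parameters $(b/n, \sigma)$ and sensitivity $C$. The final model $w^T$ is a deterministic function of $\{\tilde g^t\}_{t=0}^{T-1}$, hence by post-processing it suffices to bound the privacy of the $T$-fold adaptive composition of these releases. Invoking the moments accountant bound (Theorem~1 of \citet{abadi2016deep}) then yields the stated $(\varepsilon,\delta)$-DP guarantee with constants $c_1, c_2$ whenever $\sigma \ge c_2 \frac{b\sqrt{T\log(1/\delta)}}{n\varepsilon}$ and $\varepsilon < c_1 b^2 T/n^2$.

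The only conceptual subtlety, and the step I expect to require the most care, is the post-processing argument for $D^t$: one must verify that even though $D^t$ rescales the per-example gradient \emph{before} clipping, it does not enlarge the sensitivity because the clip at radius $C$ is applied afterward, and $D^t$ itself is independent of the single differing example in the neighboring-dataset sense (it is measurable with respect to the sigma-algebra generated by the previously released noisy gradients). Once this is established, the rest is a direct citation of the moments accountant composition and subsampling amplification, with no new calculation specific to \name.
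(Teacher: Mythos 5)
Your proposal is correct and follows essentially the same route as the paper: the paper's justification is precisely that raw data cross the privacy barrier only once per iteration in Line 9 via a subsampled Gaussian mechanism with sensitivity $C$ (the clip being applied \emph{after} preconditioning by the already-private $D^t$), so the guarantee reduces by post-processing and composition to the standard moments-accountant bound of \citet{abadi2016deep}. Your writeup simply makes explicit the inductive post-processing argument for $D^t$ that the paper states informally.
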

\vspace{-0.3em}

In practice, we use R{\'e}nyi differential privacy (RDP) for the subsampled Gaussian mechanism accountant~\citep{mironov2019r} to compute the actual $\varepsilon$'s reported in the experiments (Section~\ref{sec:experiments}).

\vspace{-.4em}
\section{Convergence Analysis} \label{sec:analysis}
\vspace{-.4em}

In this section, we analyze Algorithm~\ref{alg:dp2} for both convex and non-convex problems. We aim to study the convergence properties of \name and investigate the trade-offs between delay and privacy noise. In doing so, key challenges are introduced by alternating between adaptive and non-adaptive updating and through the staleness of preconditioners.

\vspace{-.2em}
\subsection{Convex Cases} \label{sec:analysis:convex}
\vspace{-.2em}

For convex functions, we define the optimal model $w^*$ as
    $w^* \in~\arg\min_w F(w)$.
First we state some assumptions (apart from convexity) that are used in the analysis.
\begin{assumption} \label{assum:bounded_domain}
There exists a constant $R$ such that $\|w^t-w^*\|_2 \leq R$ for any iteration t.
\end{assumption}
\begin{assumption}[Bounded stochastic gradient norm]
\label{assum:bounded_sensitivity}
There exists a constant $C$ such that $\left\|g^{i,t}\right\|_2 \leq C$ for any $i \in [n]$ and iteration $t$.
\end{assumption}
\vspace{-0.5em}
Assumption~\ref{assum:bounded_domain} (bounded domain across all iterations) is commonly used in adaptive optimization literature~\citep{levy2018online, reddi2019convergence,asi2021private,li2022private}.
Assumption~\ref{assum:bounded_sensitivity} aims to bound the $L_2$ norm of the stochastic gradient, thus helping bound the $L_2$ sensitivity of the operation of calculating and averaging individual gradients from a mini-batch. Assuming bounded stochastic gradient norm is standard in prior works on convex and non-convex private optimization~\citep[e.g.,][]{kairouz2021nearly,zhou2020private, li2022private}.  Under this assumption, suppose the clipping does not happen, we have $\Tilde{g}^t \gets g^t + \mathcal{N}(0, \sigma^2 C^2/b^2)$, where $g^t := \frac{1}{b} \sum_{i \in B} g^{i, t}$.
Without loss of generality, let $s_1$$=$$s_2$ in Algorithm~\ref{alg:dp2}. Our main convergence result is  as follows (assuming $t$ starts from 1).
\begin{theorem}[Convergence of Algorithm~\ref{alg:dp2} for convex problems] \label{thm:convergence:convex}
Let Assumptions~\ref{assum:bounded_domain} and~\ref{assum:bounded_sensitivity} hold. Assume $F$ is a convex function. Let the learning rate $\alpha^t$ be set as $\alpha^t \gets \frac{\alpha^{\left\lfloor\frac{t}{2s}\right\rfloor+\left\lfloor\frac{t+s}{2s}\right\rfloor+1}}{\sqrt{t}}$. After running Algorithm~\ref{alg:dp2} for $T$ iterations with $s=\upsilon T$ for a small constant $\upsilon \in (0,1]$, we obtain
\begin{align*}
   \min_{t\in [T]} \mathbb{E}\!\left[F(w^t)\right] \!- \! F(w^*) \! \leq \!  \frac{R^2+\kappa}{\alpha^{\left\lfloor\frac{1}{2\upsilon}\right\rfloor+\left\lfloor\frac{1+\upsilon}{2\upsilon}\right\rfloor}} \frac{1}{\sqrt{T}} \!\sum_{t \in T_{\upsilon}}\! \mathbb{E}\!\left[\left\|D^t\right\|_1\right] \!+\! \frac{1}{T}\sum_{t=1}^T \! \frac{\alpha^{\left\lfloor\frac{t}{2\upsilon T}\right\rfloor+\left\lfloor\frac{t+\upsilon T}{2\upsilon T}\right\rfloor}}{\sqrt{t}} \mathbb{E}[\|N^t\|^2_{D^t}],
\end{align*}
where $T_{\upsilon}$ denotes the iteration indices where we switch from private RMSProp steps to private SGD steps plus the last iteration, with cardinality $|T_{\upsilon}|=\lceil \frac{1}{2\upsilon} \rceil$, $N^t \sim \mathcal{N}(\mathbf{0}, \sigma^2 C^2 / b^2)$, and  
\vspace{-0.3em}
\begin{align*}
\kappa \geq \max\left\{\alpha^2 C^2, \frac{C h(s)}{\epsilon\sqrt{1-\beta}} \right\}, ~ \alpha = \min\left\{\epsilon, \frac{1}{\sqrt{M} +\epsilon}, 1\right\} ~ \text{ where } M := C^2 + \frac{\sigma^2 C^2}{sb^2}.
\end{align*}
\vspace{-1.7em}
\end{theorem}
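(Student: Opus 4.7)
My plan is to adapt the standard online-convex analysis of adaptive methods (as in AMSGrad/AdaGrad), modified for two new features: (i) in Algorithm~\ref{alg:dp2} the privacy noise $N^t$ is injected \emph{after} preconditioning, so the effective update is $w^{t+1}=w^t-\alpha^t(g^t/D^t+N^t)$ rather than $w^t-\alpha^t(g^t+N^t)/D^t$; and (ii) $D^t$ alternates between $\mathbf{1}$ on SGD sub-blocks of length $s$ and a fixed diagonal $\sqrt{v}+\epsilon$ on the next RMSProp sub-block, with $v$ only refreshed at transitions. First I would expand $\|w^{t+1}-w^*\|_{D^t}^2$ and use $\langle u/D^t,v\rangle_{D^t}=\langle u,v\rangle$ to turn the cross term into $\langle g^t,w^t-w^*\rangle+\langle D^t N^t,w^t-w^*\rangle$; the noise piece vanishes in expectation since $D^t$ is built from gradients strictly before iteration $t$. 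Convexity then upgrades $\mathbb{E}\langle g^t,w^t-w^*\rangle$ to $\mathbb{E}[F(w^t)]-F(w^*)$, producing the per-step inequality
\[
\mathbb{E}[F(w^t)]-F(w^*)\le\tfrac{1}{2\alpha^t}\mathbb{E}\bigl[\|w^t-w^*\|_{D^t}^2-\|w^{t+1}-w^*\|_{D^t}^2\bigr]+\tfrac{\alpha^t}{2}\mathbb{E}\bigl[\|g^t\|_{1/D^t}^2+\|N^t\|_{D^t}^2\bigr].
\]

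Next I would sum $t=1,\dots,T$ and deal with the non-stationarity of $D^t$ by splitting the sum into length-$s$ blocks on which $D^t$ is constant. Within a block, Abel summation collapses the telescope provided $1/\alpha^t$ is nondecreasing; only the endpoint contribution $\|w^{\tau+1}-w^*\|_{D^{\tau+1}}^2-\|w^{\tau+1}-w^*\|_{D^{\tau}}^2$ survives each transition. The peculiar schedule $\alpha^t=\alpha^{\lfloor t/(2s)\rfloor+\lfloor(t+s)/(2s)\rfloor+1}/\sqrt{t}$ is tailored so that the exponent of $\alpha$ increments whenever we cross either type of block boundary, which absorbs these cross-boundary contributions into geometric factors of $\alpha\le 1$. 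Using Assumption~\ref{assum:bounded_domain} in the form $\|w^{\tau+1}-w^*\|_{D^{\tau+1}}^2\le R^2\|D^{\tau+1}\|_1$ at the RMSProp-to-SGD transitions (the set $T_\upsilon$, with $s=\upsilon T$ giving $|T_\upsilon|=\lceil 1/(2\upsilon)\rceil$), the surviving boundary mass contributes exactly the $(R^2+\kappa)\,\alpha^{-(\lfloor 1/(2\upsilon)\rfloor+\lfloor(1+\upsilon)/(2\upsilon)\rfloor)}\,\sum_{t\in T_\upsilon}\mathbb{E}\|D^t\|_1/\sqrt{T}$ prefactor in the statement.

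For the remaining two sums, the noise contribution $\sum_t \tfrac{\alpha^t}{2}\mathbb{E}\|N^t\|_{D^t}^2$ is already in the desired form, so I would fold it directly into the second term of the theorem. The gradient term $\sum_t\tfrac{\alpha^t}{2}\|g^t\|_{1/D^t}^2$ I would absorb into $\kappa$ by cases: on SGD blocks $D^t=\mathbf{1}$ and Assumption~\ref{assum:bounded_sensitivity} gives $\|g^t\|_{1/D^t}^2\le C^2$, producing the $\alpha^2 C^2$ piece of $\kappa$; on RMSProp blocks I lower-bound $\sqrt{v}+\epsilon\ge\epsilon$ and upper-bound $v$ using the moment recursion $v\gets\beta v+(1-\beta)(G^t/s_1)^2$ with averaged noisy gradients of second moment at most $M=C^2+\sigma^2C^2/(sb^2)$, which together with the $(1-\beta)^{-1/2}$ mixing of the RMSProp recursion yields the $Ch(s)/(\epsilon\sqrt{1-\beta})$ piece and justifies the stated choice of $\alpha$. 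Finally, $\min_{t\in[T]}\mathbb{E}[F(w^t)]-F(w^*)\le\tfrac{1}{T}\sum_{t=1}^T(\mathbb{E}[F(w^t)]-F(w^*))$ closes the proof.

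The main obstacle I expect is the block-wise telescoping in the second step: aligning the two floor expressions in the learning-rate schedule with the SGD/RMSProp transitions so that the surviving boundary contributions are indexed exactly by $T_\upsilon$ and scaled by the announced exponent of $\alpha$, while simultaneously maintaining the descent inequality across transitions where the weighted norm itself changes. A secondary but unavoidable step is the uniform drift bound on $v$ (what is hidden in $h(s)$), which captures the effect of staleness and is the only place where the noise-reduction factor $s$ re-enters the utility bound; the $\alpha^t C^2$ bound on SGD blocks and the $\|N^t\|_{D^t}^2$ bound on RMSProp blocks are, by comparison, direct.
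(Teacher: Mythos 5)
Your proposal is correct and follows essentially the same route as the paper's proof: expand the $D^t$-weighted distance, use convexity and the zero-mean noise to get the per-step inequality, telescope block-wise so that only boundary terms at the RMSProp-to-SGD transitions (the set $T_\upsilon$) survive under the bounded-domain assumption and the floor-indexed learning-rate schedule, and absorb the $\sum_t \alpha^t\langle g^t, g^t/D^t\rangle$ sum into $\kappa$ via the $\alpha^2C^2$ bound on SGD blocks and the $h(s)$-controlled staleness bound on RMSProp blocks. The only cosmetic difference is that the paper organizes the gradient-term absorption as an explicit induction maintaining $\sum_{t'\le t}\frac{\alpha^{t'}}{2}\mathbb{E}[\langle g^{t'},g^{t'}/D^{t'}\rangle]\le\frac{\kappa}{\alpha^t}\mathbb{E}[\|D^t\|_1]$ over the same four transition cases you enumerate.
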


We defer all proofs to Appendix~\ref{app:proofs} and state simplified convergence results in Corollary~\ref{coro:convergence_simplified}. 
As we can see, the above upper bound relies on a critical metric $h(s)$ which is related to temporal gradient similarity and the amount of staleness $s$, formally defined as:
\begin{align*}
    h(s) \geq \max_{t \in [T]} \frac{\mathbb{E}\left[\left\|g^t\right\|_1\right]}{\mathbb{E}\left[\left\|\frac{1}{s} G^{\left\lfloor \frac{t}{s} \right\rfloor s} \right\|_1 \right]+d\epsilon} =\max_{t \in [T]} \frac{\mathbb{E}\left[\left\|g^t\right\|_1\right]}{\mathbb{E}\left[\frac{1}{s} \left\|\sum_{i=\left\lfloor \frac{t}{s} \right\rfloor s-s}^{\left\lfloor \frac{t}{s} \right\rfloor s-1} \Tilde{g}^i \right\|_1\right]+d\epsilon} ,
\end{align*}
\begin{wrapfigure}[11]{r}[0pt]{0.26\textwidth}
  \centering
  \includegraphics[width=0.96\linewidth]{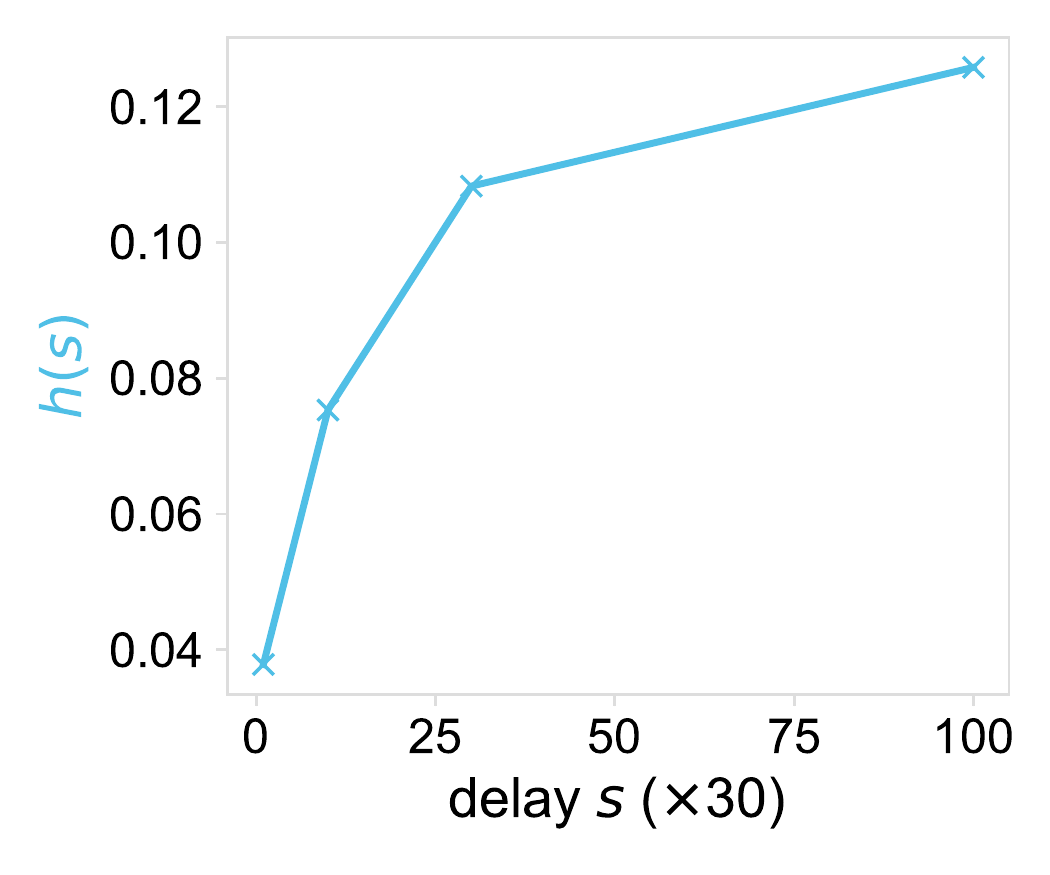}
  \vspace{-1em}
  \caption{Visualization of $h(s)$ versus $s$ on IMDB.}
  \label{fig:hs_delay}
  \vspace{-.5em}
\end{wrapfigure}
where the expectation is taken with respect to all randomness in the algorithm, and $G^{\left\lfloor \frac{t}{s} \right\rfloor s} \in \mathbb{R}^d$ refers to the latest accumulator that is used to update $v$ (Line 5 in Algorithm~\ref{alg:dp2}). 
A smaller $h(s)$ indicates better convergence. 
We see that the denominator of $h(s)$ can be decomposed into the average of past raw gradients and the average of random Gaussian noise. Intuitively, $h(s)$ tends to be smaller as gradients across the $s$ iterations in $G^{\left\lfloor \frac{t}{s} \right\rfloor s}$ are more similar with the current gradient $g^t$ in terms of the gradient norms. In Appendix~\ref{app:proofs:hs}, we show that an upper bound of $h(s)$ can be expressed as $c_1+c_2 s$ where $c_1, c_2$ are two constants. We also visualize the value of $h(s)$ on the IMDB dataset in Figure~\ref{fig:hs_delay}, and show that (1) the values of $h(s)$ are consistently small across all delays, and (2) $h(s)$ increases as the $s$ gets larger, which is consistent with the expression of $s$. 

\textbf{Trade-offs between delay and noise.} Here we discuss how $s$ affects convergence based on our analysis. Intuitively, larger $s$ (larger delay) results in staler preconditioners, but introduces less noise due to private gradient averaging. In our convergence bound, there are several terms that depend on $s$ (or $\upsilon$). Although this makes it difficult to derive a closed-form characterization of an optimal $s$, we can analyze the effects of $s$ in simplified settings.
In particular, examine the first term of the RHS of the convergence bound,  let $\alpha=\frac{1}{\sqrt{M}+\epsilon}=\frac{1}{\sqrt{c_3+\frac{c_4}{\upsilon}}+\epsilon}$ (where $c_3, c_4$ are two constants), and assume $\left\lfloor\frac{1}{2\upsilon}\right\rfloor+\left\lfloor\frac{1+\upsilon}{2\upsilon}\right\rfloor=\frac{1}{2\upsilon}+\frac{1+\upsilon}{2\upsilon} = \frac{2+\upsilon}{2\upsilon}$. Combined with $h(s)$,
the dependence on $\upsilon$ in $\frac{R^2+\kappa}{\alpha^{\left\lfloor\frac{1}{2\upsilon}\right\rfloor+\left\lfloor\frac{1+\upsilon}{2\upsilon}\right\rfloor}}$ can be expressed as
$(c_1+c_2\upsilon) \left(\sqrt{c_3+\frac{c_4}{\upsilon}}+\epsilon \right)^{\frac{2+\upsilon}{2\upsilon}}$. This suggests that there exists an optimal $\upsilon$ that achieves the minimal value. In Section~\ref{sec:experiments:main}, we empirically study the effects of $s$ across real-world datasets, and demonstrate that there exist specific ranges of $s$ that provide favorable trade-offs between delay and noise (Figure~\ref{fig:effect-of-delay}).

\begin{corollary} \label{coro:convergence_simplified}
Let Assumptions~\ref{assum:bounded_domain} and~\ref{assum:bounded_sensitivity} hold. Assume $F$ is a convex function. Ignoring the constants, the convergence rate under learning rate $\alpha^t=O\left(\frac{1}{\sqrt{t}}\right)$  simplifies to
\vspace{-0.5em}
\begin{align*}
    \min_{t \in [T]} \mathbb{E}[F(w^t)] - F(w^*) \leq O\left(\frac{1}{\sqrt{T}} \max_{t \in T_{s}} \mathbb{E}\left[\|D^t\|_1\right]\right) + O\left(\frac{1}{T} \sum_{t=1}^T \frac{1}{\sqrt{t}} \mathbb{E}\left[\|N^t\|_{D^t}^2\right]\right),
\end{align*}
where $T_{s}$  denotes the iteration indices where we switch from private RMSProp steps to private SGD steps plus the last iteration (thus having a constant cardinality) and $N^t \sim \mathcal{N}(\mathbf{0}, \sigma^2 C^2/b^2)$.
\end{corollary}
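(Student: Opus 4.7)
The plan is to derive the corollary directly from Theorem~\ref{thm:convergence:convex} by absorbing every quantity not involving $T$, $t$, $D^t$, or $N^t$ into big-$O$ notation. I will invoke Theorem~\ref{thm:convergence:convex} with $s=\upsilon T$ for a fixed $\upsilon\in(0,1]$, then inspect the two terms on the right-hand side separately and verify that their prefactors are constants in $T$.

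For the first term, $(R^2+\kappa)/\alpha^{\lfloor 1/(2\upsilon)\rfloor+\lfloor (1+\upsilon)/(2\upsilon)\rfloor}\cdot\tfrac{1}{\sqrt{T}}\sum_{t\in T_\upsilon}\mathbb{E}[\|D^t\|_1]$, Assumption~\ref{assum:bounded_domain} makes $R$ a constant, the exponent of $\alpha$ is an integer depending only on $\upsilon$, and $\alpha\in(0,1]$ by its definition in Theorem~\ref{thm:convergence:convex} so $\alpha^{-\lfloor 1/(2\upsilon)\rfloor-\lfloor (1+\upsilon)/(2\upsilon)\rfloor}$ is a constant; $\kappa$ depends only on $C,\epsilon,\beta,\alpha,\sigma,b,s$, all treated as problem-level constants. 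I will then bound $\sum_{t\in T_\upsilon}\mathbb{E}[\|D^t\|_1]\leq |T_\upsilon|\,\max_{t\in T_\upsilon}\mathbb{E}[\|D^t\|_1]$, using that $|T_\upsilon|=\lceil 1/(2\upsilon)\rceil$ is constant in $T$. Collecting these observations collapses the first term to $O\bigl(\tfrac{1}{\sqrt{T}}\max_{t\in T_s}\mathbb{E}[\|D^t\|_1]\bigr)$, identifying $T_\upsilon$ with $T_s$.

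For the second term, I will use that the prefactor $\alpha^{\lfloor t/(2\upsilon T)\rfloor+\lfloor(t+\upsilon T)/(2\upsilon T)\rfloor}$ is uniformly bounded by $1$ since $\alpha\in(0,1]$ and the exponent is a non-negative integer. Pulling this bound out uniformly in $t$ yields $\tfrac{1}{T}\sum_{t=1}^T \tfrac{1}{\sqrt{t}}\mathbb{E}[\|N^t\|^2_{D^t}]$, matching the second term of the corollary up to absorbed constants.

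The main subtlety I expect is the treatment of $h(s)$ inside $\kappa$. Using $h(s)\leq c_1+c_2 s$ from Appendix~\ref{app:proofs:hs}, $\kappa$ is constant for fixed $s$ but scales with $T$ under the substitution $s=\upsilon T$ used in Theorem~\ref{thm:convergence:convex}. To make the $O(\cdot)$ absorption rigorous I will read the corollary with $s$ held as a problem-level constant (consistent with the notation $T_s$ rather than $T_\upsilon$), so that $\kappa$ and all $\upsilon$-dependent floors are constants in $T$; the remaining $T$-asymptotics then follow from the two arguments above. No additional algorithmic analysis is required beyond Theorem~\ref{thm:convergence:convex}; the proof is essentially a bookkeeping step.
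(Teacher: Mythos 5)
Your proposal is correct and matches the paper's (implicit) argument: the paper gives no separate proof of Corollary~\ref{coro:convergence_simplified}, which is obtained exactly as you describe by reading off Theorem~\ref{thm:convergence:convex} with $\upsilon$ fixed, bounding $\sum_{t\in T_\upsilon}\mathbb{E}[\|D^t\|_1]\leq |T_\upsilon|\max_{t\in T_\upsilon}\mathbb{E}[\|D^t\|_1]$ with $|T_\upsilon|=\lceil 1/(2\upsilon)\rceil$ constant, and bounding the $\alpha$-power in the second term by $1$. The one place you diverge is the handling of $\kappa$: rather than holding $s$ fixed (which would conflict with the constant-cardinality requirement on $T_s$ and with $s=\upsilon T$ in the theorem), the paper simply treats $h(s)$ itself as a problem-dependent constant (cf.\ Figure~\ref{fig:hs_delay}), so $\kappa$ is absorbed directly; your reading is a reasonable alternative but the paper's is the intended one.
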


At a high level, the first term is due to adaptive optimization using RMSProp, and the second term corresponds to the added privacy noise. Our $O\left(\frac{1}{\sqrt{T}}\right)$ rate is the same as previous results for SGD (or DP-SGD) in convex cases with delaying learning rates~\citep{nemirovski2009robust,bassily2014private}. Compared with DP-SGD, the added privacy noise would be reduced from $\frac{1}{T} \sum_{t=1}^T \frac{1}{\sqrt{t}} \mathbb{E}[\|N^t\|^2]$ to $\frac{1}{T} \sum_{t=1}^T \frac{1}{\sqrt{t}} \mathbb{E}[\|N^t\|^2_{D^t}]$ when the gradients are sparse (so that $\|D^t\|_1 < d$ in adaptive iterations). Hence, this theorem suggests some constant improvements relative to DP-SGD when we switch for a constant number of times.

\vspace{-0.05in}
\subsection{Non-Convex Cases} \label{sec:analysis:non_convex}
\vspace{-0.05in}
We make the following additional common assumptions in non-convex convergence analyses.
\begin{assumption}[Smoothness]
\label{assum:smoothness}
    Each $f(x^i; w)~(i \in [n])$ is $L$-smooth with respect to $w \in \mathbb{R}^d$.
\end{assumption}
\begin{assumption} \label{assum:bounded_variance}
    Stochastic gradient variance is bounded, i.e., $\mathbb{E}[\|g^{i,t}-\mathbb{E}[g^{i,t}]\|_2^2] \leq \tau^2$ for all $i,t$. 
\end{assumption}
\begin{theorem}[Convergence of Algorithm~\ref{alg:dp2} for non-convex problems.] \label{thm:convergence:non-convex}
Let Assumptions~\ref{assum:bounded_domain}-\ref{assum:bounded_variance} hold. Define constant $M$ as $M := C^2+\frac{\sigma^2 C^2}{sb^2}$. Under any delay parameter $s$, after running Algorithm~\ref{alg:dp2} with constant learning rates $\alpha^t=\alpha$ such that $\frac{L\alpha}{\epsilon} \leq 1$, we have
\begin{align*}
    \frac{1}{T}\sum_{t=1}^T \mathbb{E}[\|\nabla F(w^t)\|^2] \leq \frac{2(\sqrt{M}+1)F(w^1) }{\alpha T} + 2\alpha L(\sqrt{M}+1) \left(\frac{\tau^2}{2\epsilon^2 b} + \frac{d \sigma^2 C^2}{2b^2}\right).
\end{align*}
\end{theorem}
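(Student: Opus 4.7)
The plan is to carry out a standard descent-lemma analysis for $L$-smooth non-convex optimization, generalized to the preconditioned noisy update $w^{t+1} = w^t - \alpha \tilde{g}^t$ of Algorithm~\ref{alg:dp2}. First I would invoke $L$-smoothness of $F$ (inherited from Assumption~\ref{assum:smoothness} by averaging) to obtain the per-step inequality $F(w^{t+1}) \leq F(w^t) - \alpha \langle \nabla F(w^t), \tilde{g}^t \rangle + \tfrac{L\alpha^2}{2}\|\tilde{g}^t\|^2$. Treating the clip in Line~9 as inactive under Assumption~\ref{assum:bounded_sensitivity} (the standard simplification, since $\|g^{i,t}\|_2\le C$), the update decomposes as $\tilde{g}^t = g^t / D^t + N^t / b$ with $g^t = \tfrac{1}{b}\sum_{i \in B} g^{i,t}$ and $N^t \sim \mathcal{N}(\mathbf{0}, \sigma^2 C^2 I)$ independent of $g^t$ and of $D^t$. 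A key structural observation is that the privacy noise is \emph{not} scaled by $1/D^t$ while the stochastic gradient is; this asymmetry is directly visible in the form of the final bound (privacy noise contributes $d\sigma^2 C^2/b^2$ without an $1/\epsilon^2$ factor).

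Taking expectation conditional on the history through step $t$ and exploiting that $D^t$ is diagonal and determined by earlier iterates, I would decompose coordinate-wise
\begin{align*}
\mathbb{E}\langle \nabla F(w^t), \tilde{g}^t \rangle = \sum_{j} \frac{(\nabla F(w^t))_j^2}{D^t_j}, \quad \mathbb{E}\|\tilde{g}^t\|^2 \leq \sum_{j} \frac{(\nabla F(w^t))_j^2}{(D^t_j)^2} + \frac{\tau^2}{b\epsilon^2} + \frac{d\sigma^2 C^2}{b^2},
\end{align*}
using Assumption~\ref{assum:bounded_variance} together with the uniform lower bound $D^t_j \geq \epsilon$, valid in both SGD iterations (where $D^t_j = 1 \geq \epsilon$) and RMSProp iterations (where $D^t_j = \sqrt{v_j}+\epsilon \geq \epsilon$). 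The crucial algebraic step is to absorb the quadratic term into the linear one via $1/(D^t_j)^2 \leq 1/(\epsilon D^t_j)$ combined with the step-size condition $L\alpha/\epsilon \leq 1$, which gives $\tfrac{L\alpha^2}{2}\sum_j (\nabla F)_j^2/(D^t_j)^2 \leq \tfrac{\alpha}{2}\sum_j (\nabla F)_j^2/D^t_j$. Combining yields the clean per-step descent
\begin{align*}
\mathbb{E}[F(w^{t+1})] \leq \mathbb{E}[F(w^t)] - \frac{\alpha}{2}\,\mathbb{E}\!\left[\sum_{j} \frac{(\nabla F(w^t))_j^2}{D^t_j}\right] + \frac{L\alpha^2}{2}\left(\frac{\tau^2}{b\epsilon^2} + \frac{d\sigma^2 C^2}{b^2}\right).
\end{align*}

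The main obstacle is then to lower bound $\sum_j (\nabla F(w^t))_j^2 / D^t_j$ by $\|\nabla F(w^t)\|^2 / (\sqrt{M}+1)$, which requires controlling the magnitude of the preconditioner. I would show by induction on the moving-average recursion $v \gets \beta v + (1-\beta)(G^t/s)^2$ that $\mathbb{E}[v_j] \leq M$ for every $j$ and $t$: each $(G^t/s)_j$ is an average of $s$ private SGD gradients, whose second moment is bounded by the clipped-signal contribution $C^2$ plus the averaged Gaussian-noise variance $\sigma^2 C^2/(sb^2)$, i.e., exactly $M = C^2 + \sigma^2 C^2/(sb^2)$; and the exponential average preserves this bound. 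Two applications of Jensen's inequality (to $\sqrt{\cdot}$ and to $1/x$) then give $\mathbb{E}[D^t_j] \leq \sqrt{M}+1$ (using $\epsilon \leq 1$, implicit from $L\alpha/\epsilon \leq 1$) and $\mathbb{E}[1/D^t_j] \geq 1/(\sqrt{M}+1)$. The subtle point is that $D^t$ and $\nabla F(w^t)$ are not independent, so this would likely be handled either by treating $\sqrt{M}+1$ as a deterministic surrogate bound on $D^t_j$ or by an inner conditional-expectation argument that integrates out the fresh noise before the preconditioner. With this lower bound in hand, telescoping the descent from $t=1$ to $T$, dividing by $T$, and using $F(w^{T+1}) \geq 0$, the two contributions on the right-hand side produce exactly $\tfrac{2(\sqrt{M}+1)F(w^1)}{\alpha T}$ and $2\alpha L(\sqrt{M}+1)\bigl(\tfrac{\tau^2}{2\epsilon^2 b} + \tfrac{d\sigma^2 C^2}{2b^2}\bigr)$, matching the statement.
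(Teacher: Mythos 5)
Your proposal follows essentially the same route as the paper's proof: the descent lemma for $L$-smooth $F$, the decomposition $\tilde g^t = g^t/D^t + N^t$, absorbing the quadratic term via the step-size condition $L\alpha/\epsilon \le 1$ together with $D^t_j \ge \epsilon$, the lemma $\mathbb{E}[v_j]\le M$ to control the preconditioner, the resulting lower bound $\sum_j \mathbb{E}\bigl[(\nabla_j F(w^t))^2/D^t_j\bigr] \ge \mathbb{E}[\|\nabla F(w^t)\|^2]/(\sqrt{M}+\epsilon)$, and a telescoping sum. The one step you flag as subtle---the dependence between $D^t$ and $\nabla F(w^t)$---is handled in the paper via the factorization $\mathbb{E}[\langle u,v\rangle]\le \mathbb{E}[\|u\|_1]\,\mathbb{E}[\|v\|_\infty]$ (attributed to H\"older), which itself tacitly requires an independence-type justification, so your caution there mirrors a looseness already present in the paper rather than a defect of your argument.
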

\vspace{-0.5em}
The proof is deferred to Appendix~\ref{app:proof:non-convex}. Compared with Theorem~\ref{thm:convergence:convex}, here we do not have constraints on $s$. Note that to guarantee $(\varepsilon, \delta)$-DP by running $T$ iterations, we can set $\sigma^2 = O\left(\frac{b^2 T \log(1/\delta)}{n^2\varepsilon^2}\right)$, $\alpha = O\left(\frac{1}{\sqrt{d}}\right)$, and $T=O\left(\frac{n\varepsilon}{\log(1/\delta)}\right)$, to arrive at a convergence bound $O\left(\frac{\sqrt{d}}{n\varepsilon}+\frac{\tau^2}{\sqrt{d} b}\right)$.
Under any $s$, our rate (with and without noise) is the same as previous results on DP-SGD and (DP) adaptive methods for non-convex problems~\citep{zaheer2018adaptive, li2022private}. We note that our non-convex analysis does not directly highlight the benefits of adaptivity or trade-offs around $s$; hence the optimal choice of $s$ according to this result is $s=T$, to maximize the goal of reducing privacy noise. However, the practical performance can be better than the upper bound derived here, as shown in our experiments (Section~\ref{sec:experiments}). Most of the previous works studying stochastic non-convex adaptive optimization does not prove improvements relative to SGD~\citep[e.g.,][]{zaheer2018adaptive,ward2020adagrad,de2018convergence,alacaoglu2020new}. It is still an open problem to rigorously characterize the benefits of adaptivity for non-convex problems, which we leave for future work.

\vspace{-0.07in}
\section{Empirical Evaluation} \label{sec:experiments}
\vspace{-0.07in}

In this section we report empirical results on a range of learning tasks. In Section~\ref{sec:experiments:main}, we compare \name with the baselines of DP-SGD and vanilla DP adaptive methods across various privacy budgets, and investigate the effects of delay on all datasets. We additionally compare \name with recent more advanced private adaptive methods in Section~\ref{sec:experiments:baselines}, and conduct ablation studies to validate the effectiveness of different \name components in Section~\ref{sec:experiments:ablation}.

In all experiments, we use R{\'e}nyi differential privacy (RDP) accountant for the subsampled Gaussian mechanism~\citep{mironov2019r} for privacy accounting. We focus on the RMSProp optimizer~\citep{hinton2012rmsprop} and provide results relating to other adaptive methods such as AdaGrad~\citep{duchi2011adaptive,streeter2010less} in Appendix~\ref{app:additional_results}. Our experiments are implemented in JAX~\citep{jax2018github} with Haiku~\citep{haiku2020github} to auto-vectorize over the per-example operations (e.g.\ per-example clipping) for substantial speedups~\citep{subramani2021enabling}.
Unless explicitly stated, we report results with the best grid-searched hyperparameters. 
Note that for \name we tune the learning rates and clipping thresholds separately for private SGD iterations and private adaptive (RMSProp) iterations. See Appendix~\ref{supp:sec:hyperparams} for hyperparameter details.
Our code is publicly available at~\href{https://github.com/kenziyuliu/DP2}{\texttt{github.com/kenziyuliu/DP2}}.

\textbf{Tuning $s$.}\enspace In all experiments, we tune the delay parameter ($s$) via grid search. For convex tasks, we choose $s$ from $\{0.025, 0.5, 0.1, 0.5, 1, 2\}$ epochs. For the non-convex model, we choose $s$ from $\{0.5, 3, 10, 25\}$ epochs. We explore the sensitivity of \name to $s$ in Section~\ref{sec:experiments:baselines}, and show that there exist a wide range of $s$ parameters that result in superior performance compared with baseline methods.

\textbf{Datasets and Tasks.} We pick datasets and tasks where adaptivity is crucial (e.g., those involving sparse gradients). For such tasks, adaptive methods have major benefits relative to SGD in non-private training, and we expect \name to retain the benefits in private training. See Appendix~\ref{app:datasets} for a detailed description.
For all datasets, we explore the effects of several noise multiplier ($\sigma$) values, and set $\delta = 10^{-k}$ where $k$ is the smallest integer that satisfies $10^{-k}\leq 1/n$ for the training dataset size $n$.

\vspace{-0.2em}
\subsection{\texorpdfstring{\name}{DP2} compared with DP-SGD and vanilla DP adaptive methods} \label{sec:experiments:main}
\vspace{-0.2em}

We  consider two popular baselines: DP-SGD~\citep{abadi2016deep} and vanilla DP-RMSProp~\citep{zhou2020private}. In vanilla DP adaptive methods, private gradients are plugged into adaptive updating rules to approximate the preconditioners at each iteration.  
Figure~\ref{fig:baselines} compares \name-RMSProp with DP-SGD and DP-RMSProp. We observe that across all datasets, \name  consistently and substantially outperforms the baselines in terms of both convergence and absolute performance.

\begin{figure}[h!]
    \centering
    \includegraphics[width=0.98\linewidth]{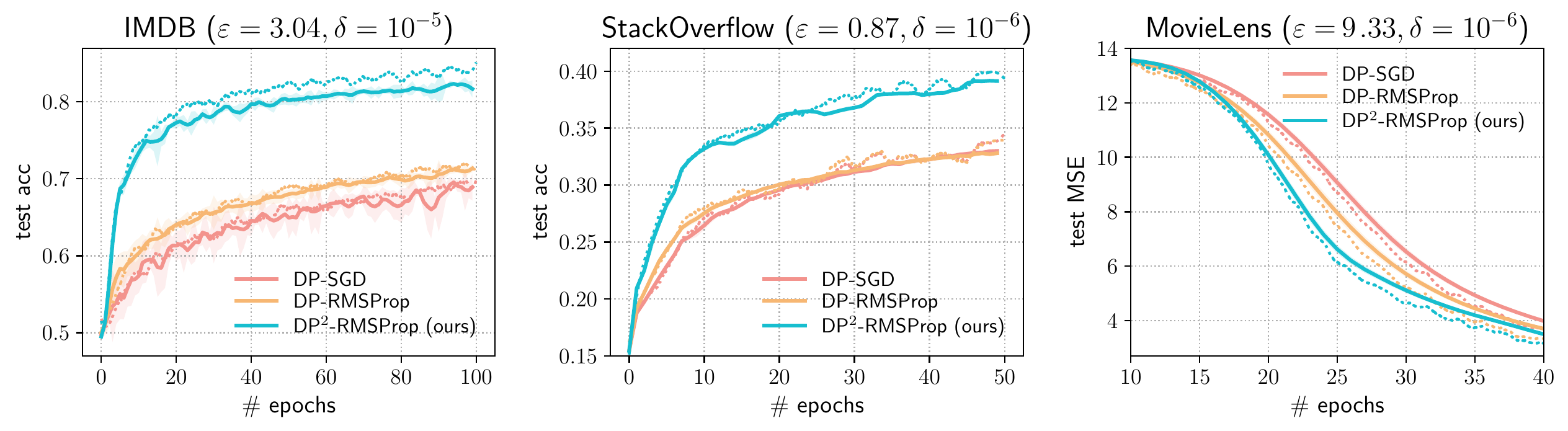}
    \vspace{-.8em}
    \caption{
    {\textbf{Test performance} of \name compared to DP-SGD and DP-RMSProp} on {IMDB} (left),  {StackOverflow} (middle), and {MovieLens-100k} (right) for a fixed privacy budget. For all datasets, we calculate the privacy loss ($\varepsilon$) under fixed $\delta$'s, noise multipliers \{1.0, 1.0, 0.5\}, and batch size 64.
    All runs are repeated over 5 random seeds. Dotted lines correspond to training metrics.
    }
    \vspace{-1em}
    \label{fig:baselines}
\end{figure}

\textbf{Privacy/utility trade-offs.} Figure~\ref{fig:baselines} reports learning curves under specific privacy budgets determined by the batch size and the number of epochs. Here, we additionally explore privacy/utility trade-offs across a range of privacy parameters, where $\varepsilon$ ranges are consistent with prior works~\citep[e.g.,][]{kairouz2021practical}. Results are shown in Figure~\ref{fig:privacy-utility-tradeoff}. We 
observe that similar to the results in Figure~\ref{fig:baselines}, \name significantly outperforms DP-SGD and DP-RMSProp under each privacy budget. For reference, the non-private RMSProp method achieves 87\% accuracy, 62\% accuracy, and 0.88 mean square error (MSE) on IMDB, StackOverflow, and MovieLens, respectively. 
Indeed, with weaker privacy (larger $\varepsilon$), we expect smaller utility gaps between private and non-private optimization. 
In Appendix~\ref{supp:sec:increasing-compute}, we additionally explore how increasing the computational budget may affect the privacy-utility trade-off.

\begin{figure}[t]
    \centering
    \includegraphics[trim=0 5 0 3,clip,width=\linewidth]{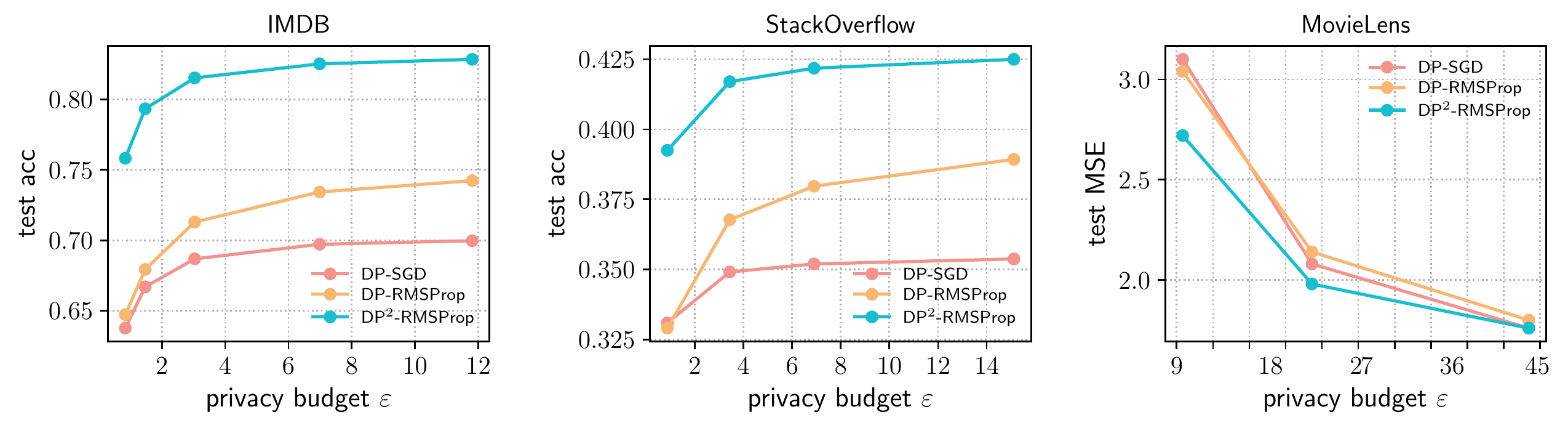}
    \vspace{-1.5em}
    \caption{
    \textbf{Privacy/utility trade-offs} of \name-RMSProp (Algorithm~\ref{alg:dp2}) compared with DP-SGD and DP-RMSProp for a range of privacy budgets. We see that \name-RMSProp consistently achieves more favorable privacy/utility trade-offs than the baseline methods.
    } 
    \vspace{-.5em}
    \label{fig:privacy-utility-tradeoff}
\end{figure}

\begin{figure}[h!]
    \centering
    \includegraphics[trim=0 5 0 3,clip,width=\linewidth]{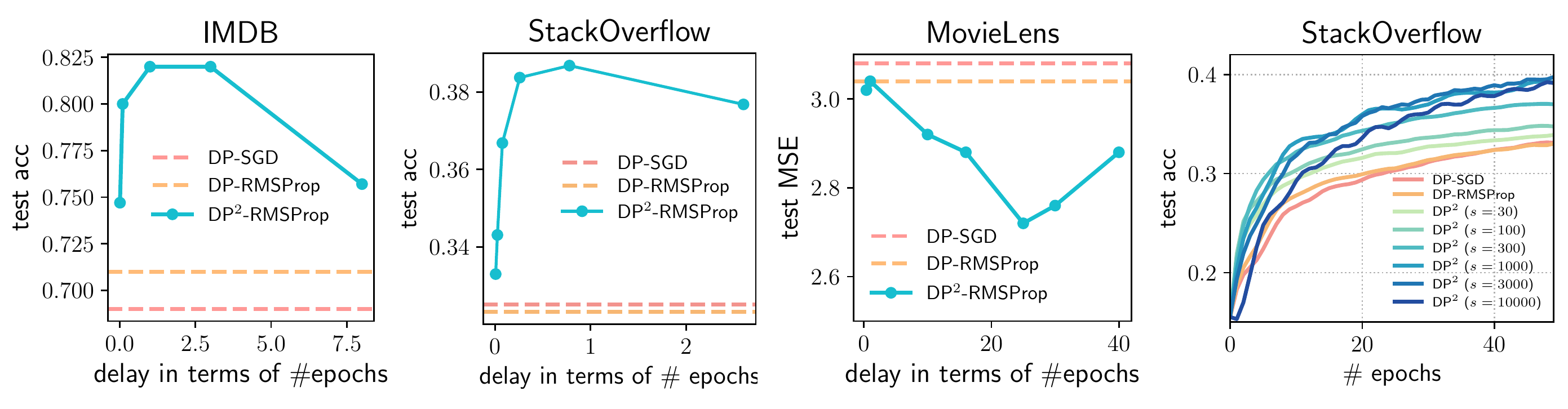}
    \vspace{-1.5em}
    \caption{
    \textbf{Effect of the delay parameter $s$}. 
    We show trade-offs between delay and noise in the first three subplots. The rightmost subfigure showcases convergence curves under different delays ($s$=10000 corresponds to delaying for $\approx3$ epochs) where \name achieves $4\times$ convergence speedup than DP-SGD. Privacy settings follow those of Figure~\ref{fig:baselines}.
    Although a specific value of $s$ achieves the greatest improvements, we observe that nearly all instantiations of \name improve upon the baselines.
    }
    \vspace{-1em}
    \label{fig:effect-of-delay}
\end{figure}

\textbf{Effects of $s$.}
Finally, we empirically study the effect of the delay parameter $s$. Intuitively, there exists a trade-off between the amount of delay and the privacy noise in the preconditioner: averaging over more historical gradients (larger $s$) could yield less noisy preconditioners, while introducing more staleness. In Figure~\ref{fig:effect-of-delay}, we report test performance versus the delay $s$ across all datasets on the first three subplots. In the last subplot, we additionally show the convergence behavior under different values of $s$. These results suggest that there is a ``sweet spot'' for $s$ to yield good performance---small delays are gradually improving over DP-RMSProp; moderate delays perform best in terms of convergence and absolute performance; and large delays may slow down convergence (although it is possible to reach similar performance with sufficient training). These empirical results are consistent with the implications of our convergence analysis discussed in Section~\ref{sec:analysis:convex}.

\vspace{-.8em}
\subsection{\texorpdfstring{\name}{DP2} compared with recent methods for private optimization} \label{sec:experiments:baselines}
\vspace{-.2em}

As discussed in Section~\ref{sec:related_work}, beyond DP-SGD and vanilla DP adaptive methods, another line of work uses \textit{auxiliary, public data} to improve private (adaptive) optimization. While \textit{not directly comparable} to \name since \name does not require any side/public information, we compare \name to two state-of-the-art methods along this direction\footnote{We do not directly compare with the prior work of~\citet{asi2021private} as the code is not publicly available and implementation details are missing in the paper; however, the more recent PDA-DPMD work of~\citet{amid2021public} we compare with suggests superior performance to~\citet{asi2021private}. We also implement the diagonal variant of the method proposed in the theoretically-focused work of~\citet{kairouz2021nearly}, but observe that accuracy improves only marginally beyond random guessing (see Figure~\ref{supp:fig:imdb-krrt21} in \cref{supp:sec:public}).}: (1) AdadPS~\citep{li2022private} which uses public data or their statistics to estimate gradient geometry, and (2) PDA-DPMD~\citep{amid2021public}, which uses the loss on public data as a mirror map to learn the underlying gradient geometry. 
Results are reported in Table~\ref{table:recent-methods}, which show that \name has comparable performance to state-of-the-art baselines, but without the need to access auxiliary data. 
See Appendix~\ref{supp:sec:public} for full details and convergence curves.

\setlength{\tabcolsep}{4pt}
\begin{table}[h!]
\centering
{
\scalebox{0.9}{
\begin{tabular}{@{}l  cc @{\hspace{6mm}}  cc @{\hspace{6mm}} c @{}}
\toprule[\heavyrulewidth]
    \multirow{2}{*}{\textbf{Dataset}} & \multirow{2}{*}{\textbf{DP-SGD}} & \multirow{2}{*}{\textbf{DP-RMSProp}} & \multirow{2}{*}{\textbf{PDA-DPMD}} & \textbf{AdaDPS}  & \multirow{2}{*}{\textbf{\name-RMSProp}} \\
     &  &  &  & (w/ RMSProp)  &  \\
    \midrule
    IMDB $\boldsymbol{\uparrow}$ & .687 $\pm$ .018 &  .713 $\pm$ .005  & .703 $\pm$ .005 & \textbf{.826} $\pm$ .003  & \textbf{.815} $\pm$ .011 \\
    StackOverflow $\boldsymbol{\uparrow}$ & .330 $\pm$ .002 & .328 $\pm$ .002  &  .353 $\pm$ .001  &  \textbf{.406} $\pm$ .027   &  \textbf{.391} $\pm$ .001 \\
    MovieLens $\boldsymbol{\downarrow}$ & 3.02 $\pm$ .068 & 2.96 $\pm$ .062 & 3.74 $\pm$ .053 & 2.86 $\pm$ .042 &  \textbf{2.78} $\pm$ .054 \\
    \bottomrule
\end{tabular}}
\vspace{-0.2em}
\caption{\name compared with other private (adaptive) methods that use public data~\citep{li2022private,amid2021public}. Even though \name \textit{does not} require auxiliary information, we find that it achieves comparable performance with these state-of-the-art approaches that require additional public data. Corresponding convergence plots are presented in Figure~\ref{supp:fig:recent-methods} in \cref{supp:sec:public}.}
\label{table:recent-methods}
}
\end{table}

\vspace{-0.5em}
\subsection{Ablation Studies} \label{sec:experiments:ablation}
\vspace{-0.5em}

\begin{wrapfigure}[15]{r}[0pt]{0.4\textwidth}
  \centering
  \vspace{-2em}
  \includegraphics[width=\linewidth]{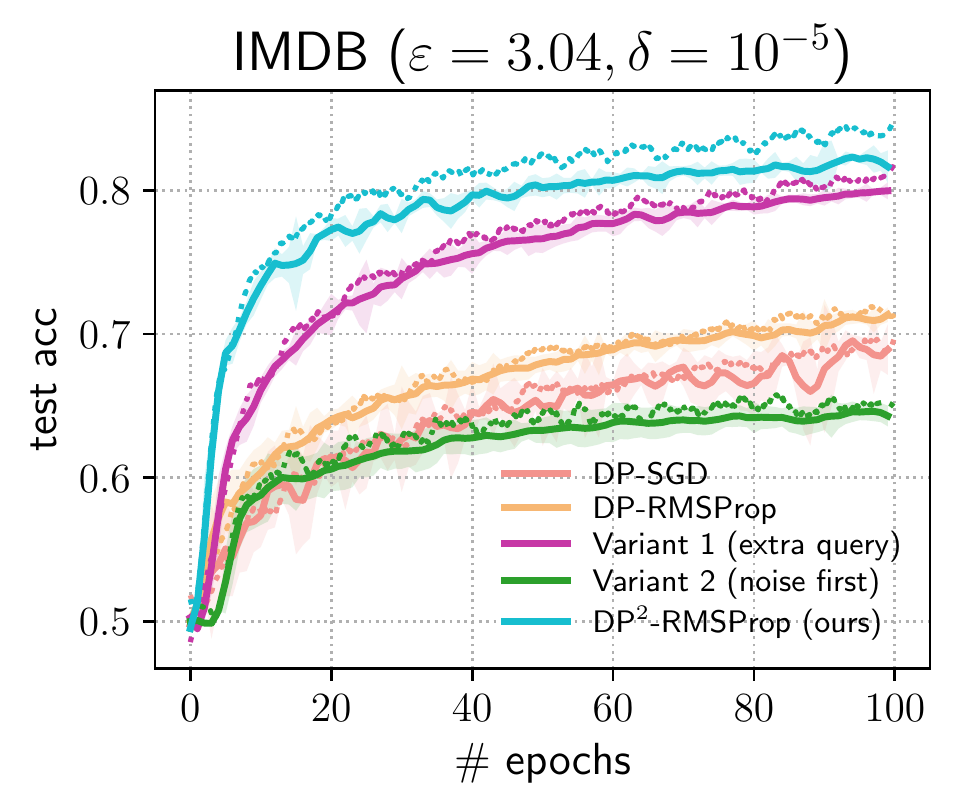}
  \vspace{-2.5em}
  \caption{Different ablation variants of \name on IMDB. The dotted lines correspond to training accuracy.}
  \label{fig:imdb_ablation}
  \vspace{-1em}
\end{wrapfigure}
Finally, we also study the effectiveness of different components of  \name. Recall that in Algorithm~\ref{alg:dp2}, we use noisy gradients from DP-SGD iterations to update both the model parameters and the preconditioner such that the total privacy cost is identical to that of DP-SGD. The first variant considers accumulating DP-SGD gradients in the same way, but it runs private adaptive methods using delayed preconditioner in almost all iterations.
This requires us to add independent noise \textit{twice} at most iterations (when accumulating the preconditioner and when noising the preconditioned update), thus increasing the total privacy budget.
The second variant is identical to \name except that it applies the delayed preconditioner \textit{after} noising the clean gradient; this is to study the order of preconditioning as discussed in Section~\ref{sec:methods}.
As illustrated in Figure~\ref{fig:imdb_ablation}, both variants indeed significantly underperform our proposed method on the IMDB dataset, thus validating the design choices of \name. We defer complete results to Figure~\ref{supp:fig:ablations} and Table~\ref{supp:table:ablations} in Appendix~\ref{supp:sec:ablations}. See also Appendix~\ref{supp:sec:algorithms} for the exact algorithms of both variants.

\vspace{-0.3em}
\section{Conclusion and Future Work} \label{sec:conclusion}
\vspace{-0.3em}

In this work, we proposed \name, a private adaptive optimization framework that uses historical gradients to construct delayed but less noisy preconditioners, yielding improved privacy/utility trade-offs \textit{without the need to access auxiliary data}. We demonstrated the effectiveness of \name both theoretically and empirically. In the future, it would be interesting to extend the techniques developed herein to other privacy-sensitive applications such as federated learning~\citep{mcmahan2017communication, reddi2020adaptive}. It is also worth exploring interplays between \name and private online optimization with tree aggregation, which similarly releases cumulative statistics with reduced noise~\citep{chan2011private}. 

\section*{Acknowledgments}
The work of TL, ZL, and VS was supported in part by the National Science Foundation
Grant IIS1838017, a Google Faculty Award, a Meta
Faculty Award, the Private AI Collaborative Research Institute, and the CONIX Research Center. Any opinions,
findings, and conclusions or recommendations expressed
in this material are those of the author(s) and do not necessarily reflect the National Science Foundation or any other funding agency.

\bibliography{refs}
\bibliographystyle{iclr2023_conference}

\newpage
\appendix
\section{Proofs} \label{app:proofs}
\begin{lemma}\label{lemma:bound_v}
    Under Assumption~\ref{assum:bounded_sensitivity}, let $s_1=s_2=s$ in Algorithm~\ref{alg:dp2}, we have for any $j \in [d]$, $\mathbb{E}[v_j]\leq C^2+\frac{\sigma^2 C^2}{sb^2}$.
\end{lemma}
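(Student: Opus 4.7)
The plan is to expand the EMA update for $v$ into a convex combination of terms of the form $(G^t/s)^2$, bound the expectation of each such term coordinate-wise, and then combine.

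First I would unroll the recursion. Since $v$ is initialized at $\mathbf{0}$ and updated only at iterations $t$ with $t \bmod 2s = s$ via $v \gets \beta v + (1-\beta)(G^t/s)^2$, after $K$ such updates we can write
\begin{equation*}
v_j = (1-\beta)\sum_{k=1}^{K} \beta^{K-k} \bigl(G^{t_k}/s\bigr)_j^{2},
\end{equation*}
where $t_1,\dots,t_K$ are the update times. The weights $(1-\beta)\beta^{K-k}$ are non-negative and sum to at most $1$, so it suffices to prove the per-term bound $\mathbb{E}[(G^{t_k}/s)_j^2] \leq C^2 + \sigma^2 C^2/(sb^2)$ for every $k$; the claim then follows by taking expectations and using this convex-combination structure.

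Next I would analyze a single $(G^{t_k}/s)_j$. By construction, between resets $G^{t_k}/s$ is the average of $s$ private SGD gradients $\tilde g^\tau$, where during SGD iterations $D^\tau = \mathbf{1}$, so
\begin{equation*}
\tilde g^\tau \;=\; \frac{1}{b}\Bigl(\sum_{i\in B_\tau}\mathrm{clip}(g^{i,\tau}, C) + \mathcal{N}(\mathbf{0}, \sigma^2 C^2)\Bigr).
\end{equation*}
By Assumption~\ref{assum:bounded_sensitivity} every $\|g^{i,\tau}\|_2 \leq C$, so clipping is inactive and $|g^{i,\tau}_j|\leq C$. Writing $\tilde g^\tau_j = g^\tau_j + \eta^\tau_j/b$ with $g^\tau_j = \frac{1}{b}\sum_i g^{i,\tau}_j$ and $\eta^\tau_j \sim \mathcal{N}(0,\sigma^2 C^2)$ independent of the gradients, and noting that the $\eta^\tau_j$ across the $s$ iterations are mutually independent, I would split
\begin{equation*}
\mathbb{E}\bigl[(G^{t_k}/s)_j^2\bigr] \;=\; \mathbb{E}\Bigl[\bigl(\tfrac{1}{s}\textstyle\sum_{\tau} g^\tau_j\bigr)^2\Bigr] \;+\; \mathrm{Var}\Bigl(\tfrac{1}{sb}\textstyle\sum_{\tau}\eta^\tau_j\Bigr).
\end{equation*}
Jensen's inequality applied to the convex function $x\mapsto x^2$ bounds the first term by $\frac{1}{s}\sum_\tau \mathbb{E}[(g^\tau_j)^2]\leq C^2$, while the second term is exactly $\sigma^2 C^2/(sb^2)$ by independence of the Gaussian noise draws across iterations.

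The main thing to be careful about is that the noise terms $\eta^\tau_j$ are independent of each other (so their variances add to $s\cdot\sigma^2 C^2$ before dividing by $(sb)^2$), and that they are independent of the gradient process so that the cross term in expanding the square vanishes. Everything else is a routine combination of Jensen's inequality and the convex-combination interpretation of the EMA, so no further subtlety is expected.
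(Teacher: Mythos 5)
Your proof is correct and follows essentially the same route as the paper's: decompose $G/s$ into the averaged clean gradients plus the averaged Gaussian noise, drop the cross term by zero-mean independence of the fresh noise draws, bound the gradient part by $C^2$ and the noise part by $\sigma^2 C^2/(sb^2)$. Your explicit unrolling of the exponential moving average into a convex combination is a small addition the paper leaves implicit, but it does not change the argument.
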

\begin{proof}
Recall that $C$ is the gradient norm bound (Assumption~\ref{assum:bounded_sensitivity}). Let the clipping threshold be $C$ as well. We have 
for $j \in [d]$,
\begin{align}
      \mathbb{E}\left[\left(\frac{1}{s}G_j\right)^2\right] &= \mathbb{E}\left[\left( \frac{1}{s} \left({g}_j^{i_1} + \cdots + {g}_j^{i_s} \right) + \frac{1}{s} \left(N_j^{i_1} + \cdots + N_j^{i_s} \right)\right)^2\right] \\
      &= \mathbb{E}\left[\frac{1}{s^2} \left({g}_j^{i_1} + \cdots + {g}_j^{i_s} \right)^2\right] + \mathbb{E} \left[\frac{1}{s^2} \left({N}_j^{i_1} + \cdots + {N}_j^{i_s} \right)^2\right] \\
      &\leq C^2 + \frac{\sigma^2 C^2}{s b^2},
\end{align}
where $\{i_1, \dots, i_s\}$ denotes the indices of $s$ noisy gradients used to obtain $G_j$, and $\{N_j^{i_1}, \dots, N_j^{i_s}\}$ are random zero-mean Gaussian variables with variance $\frac{\sigma^2 C^2}{b^2}$ under noise multiplier $\sigma$, clipping threshold $C$, and mini-batch size $b$.
Hence for any $j \in [d]$ and $t \in [T]$,
\begin{align}
     \mathbb{E}\left[\left(\frac{1}{s}G_j\right)^2\right] &\leq C^2 + \frac{\sigma^2 C^2}{sb^2} := M, \\ \mathbb{E}[v_j] &\leq M, \\
    \mathbb{E}\left[\sqrt{v_j}\right] &\leq \sqrt{\mathbb{E}[v_j]} \leq \sqrt{M} \\
    \mathbb{E}\left[D_j^t\right] &\leq \max\left\{\sqrt{M}+\epsilon, 1\right\}.
\end{align}
\end{proof}
\subsection{Proof of Theorem~\ref{thm:convergence:convex}}
Based on the updating rule, we have
\begin{align}
    &\quad \left\|w^{t+1}-w^*\right\|^2_{D^t} \\ &= \left\|w^{t}-\alpha^t \frac{g^t}{D^t} -\alpha^t N^t -w^*\right\|^2_{D^t} \\
    &= \left\|w^{t}-w^*\right\|_{D^t}^2 + \left\|\alpha^t \frac{g^t}{D^t} + \alpha^t N^t\right\|^2_{D^t} - 2 \left\langle w^t-w^*, \alpha^t g^t + \alpha^t D^t N^t \right\rangle \\
    &= \left\|w^{t}-w^*\right\|_{D^t}^2 - 2\alpha^t \left\langle g^t, w^t-w^* \right\rangle + (\alpha^t)^2 \left\langle g^t, \frac{g^t}{D^t} \right\rangle \nonumber \\ 
    &\quad - 2\alpha^t \langle w^t-w^*, D^t N^t \rangle + (\alpha^t)^2 \|N^t\|^2_{D^t} + 2(\alpha^t)^2 \langle g^t, N^t \rangle.
\end{align}
Rearranging terms gives
\begin{align}
    \left\langle g^t, w^t-w^* \right\rangle 
    &= \frac{\|w^t-w^*\|^2_{D^t} - \|w^{t+1}-w^*\|^2_{D^t}}{2 \alpha^t} + \frac{\alpha^t}{2} \left\langle g^t, \frac{g^t}{D^t} \right\rangle \nonumber \\ &\quad - \langle w^t -w^*, D^t N^t \rangle + \frac{\alpha^t}{2} \|N^t\|^2_{D^t} +  \alpha^t  \langle g^t, N^t \rangle.
\end{align}
Taking the expectation on both sides conditioned on $w^t$,
\begin{align}
    \left\langle \nabla F(w^t), w^t-w^* \right\rangle 
    &= \frac{\mathbb{E}_t\left[\|w^t-w^*\|^2_{D^t}\right] - \mathbb{E}_t[\|w^{t+1}-w^*\|^2_{D^t}]}{2 \alpha^t} \nonumber \\
    &\quad + \frac{\alpha^t}{2} \mathbb{E}_t\left[\left\langle g^t, \frac{g^t}{D^t} \right\rangle \right] + \frac{\alpha^t}{2}\mathbb{E}_t\left[\|N^t\|^2_{D^t}\right],
\end{align}
where we have used the fact that $N$ is a zero-mean Gaussian variable independent of $g^t, w^t$.
Taking the expectation on both sides and using the convexity of $F(\cdot)$:
\begin{align}
    &\mathbb{E}[F(w^t)] - F(w^*) \nonumber \\
    &\leq \frac{\mathbb{E}[\|w^t-w^*\|^2_{D^t}] - \mathbb{E}[\|w^{t+1}-w^*\|^2_{D^t}]}{2\alpha^t} + \frac{\alpha^t}{2} \mathbb{E}\left[\left\langle g^t, \frac{g^t}{D^t} \right\rangle\right] + \frac{\alpha^t}{2}\mathbb{E}\left[\|N^t\|^2_{D^t}\right].
\end{align}
Applying telescope sum, we have
\begin{align}
    &\sum_{t=1}^T \left(\mathbb{E}[F(w^t)] - F(w^*)\right) \nonumber
    \\&\leq \frac{\|w^1-w^*\|^2_{A^1}}{2\alpha_1} + \sum_{t=2}^T \left(\frac{\mathbb{E}\left[\|w^t-w^*\|^2_{D^t}\right]}{2\alpha^t}-\frac{\mathbb{E}\left[\|w^t-w^*\|^2_{D^{t-1}}\right]}{2\alpha_{t-1}}\right) \nonumber \\
    &\quad + \sum_{t=1}^T \frac{\alpha^t}{2} \mathbb{E}\left[\left\langle g^t, \frac{g^t}{D^t} \right\rangle\right] + \sum_{t=1}^T \frac{\alpha^t}{2} \mathbb{E}\left[\|N^t\|^2_{D^t}\right].
\end{align}

Hence, we need to bound the RHS:
\begin{align}
    \frac{\left\|w^1-w^*\right\|^2_{D^1}}{2\alpha^2} + \underbrace{\sum_{t=2}^T \left(\frac{\mathbb{E}\left[\|w^t-w^*\|^2_{D^t}\right]}{2\alpha^t}-\frac{\mathbb{E}\left[\|w^t-w^*\|^2_{D^{t-1}}\right]}{2\alpha^{t-1}}\right)}_{T_1} \nonumber \\ + \underbrace{\sum_{t=1}^T \frac{\alpha^t}{2} \mathbb{E}\left[\left\langle g^t,\frac{g^t}{D^t} \right\rangle \right]}_{T_2} + \sum_{t=1}^T \frac{\alpha^t}{2} \mathbb{E}\left[\|N^t\|^2_{D^t}\right],
\end{align}
where the vector $D^{t} \in \mathbb{R}^d$ satisfies that $D^{t}=\mathbf{1}$ when running private SGD steps, and $D^{t}=\sqrt{v}+\epsilon$ when running private RMSProp steps.

Let the delay parameter to be scheduled as 
\begin{align}
s = \upsilon T~(0 < \upsilon < 1)
\end{align}
and the learning rate $\alpha^t$ be
\begin{align}
   \alpha^t \gets \frac{\alpha^{\left\lfloor\frac{t}{2s}\right\rfloor+\left\lfloor\frac{t+s}{2s}\right\rfloor+1}}{\sqrt{t}}, \label{eq:learning_rate}
\end{align}
where $\alpha = \min\left\{\epsilon, \frac{1}{\sqrt{M} +\epsilon}, 1\right\}$, and $M$ is the upper bound of $\mathbb{E}\left[v_j\right]$ for $j \in [d]$, as defined and proved in Lemma~\ref{lemma:bound_v}.

We next consider the $T_1$ term. There are four cases.

\begin{enumerate}[leftmargin=*]
    \item \textbf{DP-SGD at the $t-1$-th iteration, and DP-SGD at the $t$-th iteration}: As $D^t=D^{t-1}$ there is not much requirement other that the learning rates need to satisfy $\alpha^t \leq \alpha^{t-1}$, which holds for our choice.
    
    \item \textbf{Private RMSProp at the $t-1$-th iteration, and private RMSProp at the $t$-th iteration}: Similar to previous case, the learning rates need to satisfy $\alpha^t \leq \alpha^{t-1}$, which holds for our choice.
    
    \item \textbf{DP-SGD at the $t-1$-th iteration, and private RMSProp at the $t$-th iteration}: We require 
    \begin{align}
    \frac{\alpha^t}{\epsilon} \leq \alpha^{t-1} \implies  \frac{\sqrt{v^t}+\epsilon}{\alpha^t} \geq \frac{1}{\alpha^{t-1}}
    \end{align}
    But in this case we must have $t \ \% \ s = 0$. So this is satisfied by our choice as long as $\alpha \leq \epsilon$.
    
    \item \textbf{Private RMSProp at the $t-1$-th iteration, and DP-SGD at the $t$-th iteration}
\end{enumerate}

The first three cases form an updating pattern of DP-SGD $\to$ $\cdots$ $\to$ DP-SGD $\to$ DP-RMSProp$\to$ $\cdots$ $\to$ DP-RMSProp, where every pattern takes $2s$ iterations, except for the first pattern, because the telescope sum starts from $t=2$. 
For the first pattern, we have
\begin{align}
     &\frac{\left\|w^1-w^*\right\|^2_{D^1}}{2\alpha^2} +  \sum_{t=2}^{2s} \left(\frac{\mathbb{E}\left[\|w^t-w^*\|^2_{D^t}\right]}{2\alpha^t}-\frac{\mathbb{E}\left[\|w^t-w^*\|^2_{D^{t-1}}\right]}{2\alpha^{t-1}}\right) \\
     &=  \frac{\left\|w^1-w^*\right\|^2_{D^1}}{2\alpha^2} + \sum_{t=2}^{2s} \left(\mathbb{E}\left[\left\|w^t-w^*\right\|^2_{\frac{D^t}{\alpha^t}-\frac{D^{t-1}}{\alpha^{t-1}}}\right]  \right)\\
     &\leq \frac{\left\|w^1-w^*\right\|^2_{D^1}}{2\alpha^2} + R^2 \sum_{t=2}^{2s}  \left(\frac{\mathbb{E}\left[\|D^t\|_1\right]}{2\alpha^t} - \frac{\mathbb{E}\left[\|D^{t-1}\|_1\right]}{2\alpha^{t-1}} 
     \right) \leq \frac{R^2}{2\alpha^{2s}} \mathbb{E}\left[\|D^{2s}\|_1\right],
\end{align}
where $D^{2s}=\sqrt{v}+\epsilon$.

For $k \geq 1$, we have
\begin{align}
    &\sum_{t=2sk+1}^{2sk+2s} \left(\frac{\mathbb{E}\left[\|w^t-w^*\|^2_{D^t}\right]}{2\alpha^t}-\frac{\mathbb{E}\left[\|w^t-w^*\|^2_{D^{t-1}}\right]}{2\alpha^{t-1}}\right) \nonumber \\
    &= \frac{\mathbb{E}\left[\|w^{2sk+1}-w^*\|^2_{D^{2sk+1}}\right]}{2\alpha^{2sk+1}}-\frac{\mathbb{E}\left[\|w^{2sk+1}-w^*\|^2_{D^{2sk}}\right]}{2\alpha^{2sk}} + \sum_{t=2sk+2}^{2sk+2s}\left(\mathbb{E}\left[\left\|w^t-w^*\right\|^2_{\frac{D^t}{2\alpha^t}-\frac{D^{t-1}}{2\alpha^{t-1}}}\right]\right) \nonumber\\
    &\leq \frac{\mathbb{E}\left[\|w^{2sk+1}-w^*\|^2_{D^{2sk+1}}\right]}{2\alpha^{2sk+1}}-\frac{\mathbb{E}\left[\|w^{2sk+1}-w^*\|^2_{D^{2sk}}\right]}{2\alpha^{2sk}} + R^2  \left(\frac{\mathbb{E}[\|D^{2sk+2s}\|_1]}{2\alpha^{2sk+2s}} - \frac{\mathbb{E}[\|D^{2sk+1}\|_1]}{2\alpha^{2sk+1}} \right) \nonumber \\
    &\leq \frac{\mathbb{E}\left[\|w^{2sk+1}-w^*\|^2_{D^{2sk+1}}\right]}{2\alpha^{2sk+1}} +  R^2  \left(\frac{\mathbb{E}[\|D^{2sk+2s}\|_1]}{2\alpha^{2sk+2s}} - \frac{\mathbb{E}[\|D^{2sk+1}\|_1]}{2\alpha^{2sk+1}} \right) \nonumber \\
    &\leq \frac{R^2}{2\alpha^{2sk+2s}} \mathbb{E}\left[\|D^{2sk+2s}\|_1\right],
\end{align}
where $D^{2sk+2s}=\sqrt{v}+\epsilon$ belong to DP-RMSProp updates.

We look at the second $T_2$ term, and prove by induction that there exists a constant $\kappa$ such that  
\begin{align}
    \sum_{t=1}^T \frac{\alpha^t}{2} \mathbb{E}\left[\left\langle g^t, \frac{g^t}{D^t}\right\rangle\right] \leq \frac{\kappa}{\alpha^T} \mathbb{E}\left[\|D^T\|_1\right].
\end{align}
When $T=1$ ($\alpha^1=\alpha$ and $D^1=\mathbf{1}$), $\frac{\alpha}{2} \mathbb{E}[\|g^1\|^2] \leq  \frac{\kappa {d}}{\alpha}$ holds if $\kappa \geq \alpha^2 C^2$.
At each step $t$, the goal is to get
\begin{align}
    \frac{\kappa}{\alpha^{t-1}} \mathbb{E}\left[\|D^{t-1}\|_1\right] + \frac{\alpha^t}{2} \mathbb{E}\left[\left\langle g^t, \frac{g^t}{D^t} \right\rangle\right] \leq \frac{\kappa}{\alpha^t} \mathbb{E}\left[\|D^t\|_1\right] 
\end{align}

\begin{enumerate}[leftmargin=*]
    \item \textbf{DP-SGD at the $t-1$-th iteration, and DP-SGD at the $t$-th iteration}: We require
    \begin{align}
    \frac{\kappa {d} }{\alpha^{t-1}} + \frac{\alpha^t}{2} \mathbb{E}\left[\left\|g^t\right\|^2\right] \leq \frac{\kappa {d}}{\alpha^t}
    \end{align}
    which would hold for choice of $\alpha^t$ as gradients are bounded and $\kappa \geq \alpha^2 C^2$.

    \item \textbf{Private RMSProp at the $t-1$-th iteration, and private RMSProp at the $t$-th iteration}: 

We need
\begin{align}
\frac{\kappa \mathbb{E}\left[ \| \sqrt{v^{t-1}} + \epsilon \|_1\right]}{\alpha^{t-1}} + \frac{\alpha^t}{2} \mathbb{E}\left[\left\langle g^t, \frac{g^t}{\sqrt{v^{t-1}} + \epsilon}\right\rangle \right] &\leq \frac{\kappa}{\alpha^t}\mathbb{E}\left[\|\sqrt{v^t}+\epsilon\|_1\right], \\ 
\frac{\alpha^t}{2} \mathbb{E}\left[\left\langle g^t, \frac{g^t}{\sqrt{v^{t-1}} + \epsilon}\right\rangle \right] &\leq \left(\frac{\kappa}{\alpha^t} - \frac{\kappa}{\alpha^{t-1}}\right) \mathbb{E}\left[\left\|\sqrt{v^{t-1}}+\epsilon\right\|_1\right].
\end{align}
Let

\begin{align}
    h(s) \geq \max_{t \in [T]} \left\{ \frac{\mathbb{E}\left[\|g^t\|_1\right]}{\mathbb{E}\left[ \left\| \frac{1}{s} \left| G^{\lfloor \frac{t}{s} \rfloor s } \right| + \epsilon \right\|_1 \right]}\right\}.
\end{align}
Based on our updating rule,
\begin{align}
    \mathbb{E}\left[ \left\| \sqrt{v^t}+\epsilon \right\|_1 \right] \geq \sqrt{1-\beta} \ \mathbb{E}\left[ \left\| \frac{1}{s} \left| G^{\lfloor \frac{t}{s} \rfloor s }\right| + \epsilon \right\|_1 \right].
\end{align}
Note that
\begin{align}
    \frac{\alpha^t}{2} \mathbb{E}\left[\left\langle g^t, \frac{g^t}{\sqrt{v^{t-1}} + \epsilon}\right\rangle \right] \leq \frac{\alpha^t}{2}\mathbb{E}\left[  \frac{\|g^t\|^2}{\epsilon} \right] \leq \frac{\alpha^t C}{2\epsilon} \mathbb{E}[\|g^t\|]  \leq \frac{\alpha^t C}{2\epsilon} \mathbb{E}[\|g^t\|_1]  ,
\end{align}
where we have used the assumption that $\|g^t\| \leq C$. 
Combining the above two, 
\begin{align}
    \frac{\alpha^t C}{2\epsilon} \mathbb{E}[\|g^t\|] 
    &\leq \frac{\alpha^t C}{2\epsilon} h(s) \mathbb{E}\left[ \left\| \frac{1}{s} \left| G^{\lfloor \frac{t}{s} \rfloor s }\right| + \epsilon \right\|_1 \right] \\
    &\leq \frac{\alpha^t C}{2\epsilon} \frac{h(s)}{\sqrt{1-\beta}} \mathbb{E}\left[\left\|\sqrt{v^{t-1}}+\epsilon\right\|_1\right] \\
    &\leq \kappa \left(\frac{1}{\alpha^t}-\frac{1}{\alpha^{t-1}}\right) \mathbb{E}\left[\left\|\sqrt{v^{t-1}}+\epsilon\right\|_1\right].
\end{align}
This implies the condition holds as long as $\kappa$ satisfies
\begin{align}
    \kappa \geq \frac{C h(s)}{\epsilon \sqrt{1-\beta}}.
\end{align}

\item \textbf{DP-SGD at the $t-1$-th iteration, and private RMSProp at the $t$-th iteration.} 
We want to prove
\begin{align}
    \frac{\kappa {d}}{\alpha^{t-1}} + \frac{\alpha^t}{2} \mathbb{E}\left[\left\langle g^t, \frac{g^t}{D^t} \right\rangle\right] \leq \frac{\kappa}{\alpha^t} \mathbb{E}\left[\left\|D^t\right\|_1\right].
\end{align}
As $\left\|g^t\right\|\leq C$, it holds that
\begin{align}
     \frac{\alpha^t}{2} \mathbb{E}\left[\left\langle g^t, \frac{g^t}{\sqrt{v^{t}} + \epsilon}\right\rangle \right] \leq \frac{\alpha^t}{2\epsilon} \mathbb{E}[\|g^t\|^2] \leq \frac{\alpha^t C}{2\epsilon} \mathbb{E}[\|g^t\|] \leq \frac{\alpha^t C}{2\epsilon} \mathbb{E}[\|g^t\|_1].
\end{align}
Therefore, 
\begin{align}
     \frac{\alpha^t}{2} \mathbb{E}\left[\left\langle g^t, \frac{g^t}{\sqrt{v^{t}} + \epsilon}\right\rangle \right] &\leq \frac{C h(s)}{2\epsilon\sqrt{1-\beta}} \alpha^t \mathbb{E}\left[\left\|\sqrt{v^t}+\epsilon\right\|_1\right]. 
\end{align}
Based on our learning rate set in Eq.~\eqref{eq:learning_rate},
\begin{align}
    \sqrt{t}\alpha^t &= \sqrt{t-1}\alpha^{t-1} \epsilon \\
    \implies \frac{\alpha^t}{2} &\leq \frac{1}{\alpha^t} - \frac{1}{\alpha^{t-1}\epsilon} \leq \frac{1}{\alpha^t}-\frac{{d}}{\alpha^{t-1}\mathbb{E}\left[\left\|D^t\right\|_1\right]}.
\end{align}
Hence,
\begin{align}
     \frac{C h(s)}{2\epsilon\sqrt{1-\beta}} \alpha^t \mathbb{E}\left[\left\|\sqrt{v^t}+\epsilon\right\|_1\right] &\leq \frac{C h(s)}{\epsilon\sqrt{1-\beta}} \mathbb{E} \left[\left\|D^t\right\|_1\right] \left(\frac{1}{\alpha^t} - \frac{{d}}{\alpha^{t-1} \mathbb{E}\left[\left\|D^t\right\|_1\right]}\right)\\
     &\leq \kappa \left(\frac{\mathbb{E}[\|D^t\|_1]}{\alpha^t} - \frac{{d}}{\alpha^{t-1}}\right),
\end{align}
where we require
\begin{align}
    \kappa \geq \frac{C h(s)}{\epsilon \sqrt{1-\beta}}.
\end{align}

\item \textbf{Private RMSProp at the $t-1$-th iteration, and DP-SGD at the $t$-th iteration.} We need
\begin{align}
    \frac{\kappa}{\alpha^{t-1}} \mathbb{E}\left[\left\|\sqrt{v^{t-1}} + \epsilon^{t-1}\right\|_1 \right] + \frac{\alpha^t}{2} \mathbb{E}\left[ \|g^t\|^2 \right] \leq \frac{\kappa {d}}{\alpha^t}.
\end{align}
Plug in $\mathbb{E}\left[\|\sqrt{v^{t-1}}\|_1\right] \leq d\sqrt{M}$ (Lemma~\ref{lemma:bound_v}) and $\|g^t\|^2 \leq C^2$, we have
\begin{align}
     \frac{\kappa}{\alpha^{t-1}} \mathbb{E}\left[ \|\sqrt{v^{t-1}} + \epsilon \|_1 \right] + \frac{\alpha^t}{2} \mathbb{E}\left[ \|g^t\|^2 \right] \leq \frac{\kappa}{\alpha^{t-1}} \left(d\sqrt{M}+{d}\right) + \frac{\alpha^t}{2} C^2.
\end{align}
Based on our learning rate set in Eq.~\eqref{eq:learning_rate}, for some constant $\gamma$,
\begin{align}
   \alpha^{t-1} &=\frac{\gamma}{\sqrt{t-1}},~\alpha^t \leq \frac{\gamma}{\sqrt{t} (\sqrt{M}+1)}\\
    \implies \frac{\alpha^t}{2} &\leq \frac{{1}}{\alpha^t} - \frac{\sqrt{M}+{1}}{\alpha^{t-1}} \leq  \frac{{d}}{\alpha^t} - \frac{d\sqrt{M}+{d}}{\alpha^{t-1}}.
\end{align}
Therefore
\begin{align}
    \frac{\alpha^t}{2} C^2 \leq \kappa \left(\frac{{d}}{\alpha^t}-\frac{d\sqrt{M}+{d}}{\alpha^{t-1}}\right)
\end{align}
holds as long as $\kappa \geq \alpha^2 C^2$. 
To sum up, the requirement on $\kappa$ is
\begin{align}
    \kappa \geq \max\left\{\alpha^2 C^2, \frac{C h(s)}{\epsilon\sqrt{1-\beta}}\right\}.
\end{align}
Final convergence results:
\begin{align}
   &\min_{t\in [T]} \mathbb{E}\left[F(w^t)\right] - F(w^*) \\ &\leq  \frac{R^2+\kappa}{\alpha^{\left\lfloor\frac{1}{2\upsilon}\right\rfloor+\left\lfloor\frac{1+\upsilon}{2\upsilon}\right\rfloor}} \frac{1}{\sqrt{T}} \sum_{t \in T_{\upsilon}} \mathbb{E}\left[\left\|D^t\right\|_1\right] + \frac{1}{T}\sum_{t=1}^T \frac{\alpha^{\left\lfloor\frac{t}{2\upsilon T}\right\rfloor+\left\lfloor\frac{t+\upsilon T}{2\upsilon T}\right\rfloor}}{\sqrt{t}} \mathbb{E}[\|N^t\|^2_{D^t}], 
\end{align}
where $T_v$ denotes the iteration indices where we switch from private RMSProp steps to private SGD steps plus the last iteration, and its cardinality is $|T_{\upsilon}|=\lceil \frac{1}{2\upsilon} \rceil$, and  $\kappa \geq \max\left\{\alpha^2 C^2, \frac{C h(s)}{\epsilon\sqrt{1-\beta}}, \right\}$, $\alpha = \min\left\{\epsilon, \frac{1}{\sqrt{M} +\epsilon}, 1\right\}$.
\end{enumerate}

\subsection{A closer look at \texorpdfstring{$h(s)$}{h(s)}} \label{app:proofs:hs}
We closely examine $h(s)$, defined as
\begin{align}
    h(s) \geq \max_{t \in [T]} \left\{ \frac{\mathbb{E}\left[\|g^t\|_1\right]}{\mathbb{E}\left[ \left\| \frac{1}{s} \left| G^{\lfloor \frac{t}{s} \rfloor s } \right| + \epsilon \right\|_1 \right]}\right\}.
\end{align}
Let us assume mini-batch gradients on consecutive time steps are not very different, i.e. $\|g^t - g^{t-1}\|_1 \leq M$. This means each gradient norm cannot be too far away from each other, which can be used to show the dependence of $h(s)$ on the delay parameter $s$. Denote the gap between the current iteration $t$ and the iteration where $v$ gets updated as $k$, i.e., $k := t - \lfloor \frac{t}{s} \rfloor s$. Hence,
\begin{align}
   &\frac{ \left\|g^t\right\|_1}{ \left\|\frac{1}{s} \left({g}^{t-k-1} + \cdots + {g}^{t-k-s} \right) + \frac{1}{s} \left(N^{t-k-1} + \cdots + N^{t-k-s} \right) \right\|_1 + d \epsilon}  \\
   &= \frac{ \left\|g^t - \frac{1}{s} \left({g}^{t-k-1} + \cdots + {g}_j^{t-k-s} \right) + \frac{1}{s} \left({g}^{t-k-1} + \cdots + {g}_j^{t-k-s} \right) \right\|_1}{\left\|\frac{1}{s} \left({g}^{t-k-1} + \cdots + {g}_j^{t-k-s} \right) + \frac{1}{s} \left(N^{t-k-1} + \cdots + N^{t-k-s} \right) \right\|_1 + d \epsilon} \\
   &= \frac{\left\|\frac{1}{s} \left((g^t - {g}^{t-k-1}) + \cdots + (g^t - {g}^{t-k-s}) \right) + \frac{1}{s} \left({g}^{t-k-1} + \cdots + {g}^{t-k-s} \right) \right\|_1}{\left\|\frac{1}{s} \left({g}^{t-k-1} + \cdots + {g}^{t-k-s} \right) + \frac{1}{s} \left(N^{t-k-1} + \cdots + N^{t-k-s} \right) \right\|_1 + d \epsilon} \\
   &\leq  \frac{\left\|\frac{1}{s} \left({g}^{t-k-1} + \cdots + {g}^{t-k-s} \right) \right\|_1}{\left\|\frac{1}{s} \left({g}^{t-k-1} + \cdots + {g}^{t-k-s} \right) + \frac{1}{s} \left(N^{t-k-1} + \cdots + N^{t-k-s} \right) \right\|_1 + d\epsilon }  + \frac{\frac{1}{s}(sM + \cdots + (2s)M)}{d\epsilon}
\end{align}
Denote $a:=\frac{1}{s} \left(N^{t-k-1} + \cdots + N^{t-k-s} \right) $, and $b := \frac{1}{s} \left({g}^{t-k-1} + \cdots + {g}^{t-k-s} \right)$. Then
\begin{align}
       h(s) &\leq \frac{\mathbb{E}[\left\|b\right\|_1]}{\mathbb{E}[\|a+b\|_1] + d\epsilon} + \frac{sM}{d\epsilon} \\
       &\leq \frac{1}{ \left|\frac{\mathbb{E}[\|a\|_1]}{\mathbb{E}[\|b\|_1]} -1\right|+ \frac{d\epsilon}{\mathbb{E}[\|b\|_1]}}  + \frac{sM}{d\epsilon}
\end{align}

In the special case where gradients are sparse, i.e., $\mathbb{E}[\|b\|_1] < \mathbb{E}[\|a\|_1]$, we have
\begin{align}
h(s)  &\leq \frac{1}{\frac{\mathbb{E}[\|a\|_1]}{\mathbb{E}[\|b\|_1]} + \frac{d\epsilon}{\mathbb{E}[\|b\|_1]}-1} + \frac{sM}{d\epsilon}    
\end{align}
It is easy to see that the RHS is $O\left(s\right)$, and it increases as $s$. We can informally express it as $c_1 s + c_2$, where $c_1$ and $c_2$ are two constants.

\section{Proof of Theorem~\ref{thm:convergence:non-convex}} \label{app:proof:non-convex}
First we introduce a result that will be used in this section. Under the bounded stochastic gradient variance assumption (Assumption~\ref{assum:bounded_variance}), we have that conditioned on $w^t$, 
\begin{align}
    \mathbb{E}_{t}\left[\|g^t\|^2\right] \leq \frac{\tau^2}{b} + \|\nabla F(w^t)\|^2,
\end{align}
where $b$ refers to the mini-batch size to obtain gradient $g^t$, i.e., $g^t \gets \frac{1}{b} \sum_{i \in B} g^{i, t}$. This lemma is proved in~\citet{zaheer2018adaptive}. The per-coordinate version of this result is that for $j \in [d]$,
\begin{align}
     \mathbb{E}_{t}\left[(g_j^t)^2\right] \leq \frac{\tau_j^2}{b} + \left(\nabla_j F(w^t)\right)^2,
\end{align}
and $\sum_{j \in [d]}\tau_j^2=\tau^2$. 

As we assume $F(w)$ is $L$-smooth, at each iteration $t$,
\begin{align}
    F(w^{t+1}) \leq F(w^t) + \langle \nabla F(w^t), w^{t+1}-w^t \rangle + \frac{L}{2} \left\|w^{t+1}-w^t\right\|^2. 
\end{align}
Based on the updating rule of Algorithm~\ref{alg:dp2}, we have
\begin{align}
     F(w^{t+1}) &\leq F(w^t) + \langle \nabla F(w^t), w^{t+1}-w^t \rangle + \frac{L}{2} \left\|w^{t+1}-w^t\right\|^2 \\
     &= F(w^t) - \alpha^t \left\langle \nabla F(w^t), \frac{g^t}{D^t} + N^t \right\rangle + \frac{(\alpha^t)^2L}{2} \left\|\frac{g^t}{D^t} + N^t \right\|^2,
\end{align}
where  $N \in \mathbb{R}^d$ and $N_j \sim \mathcal{N}\left(0, \frac{\sigma^2 C^2}{b^2}\right)$ with noise multiplier $\sigma$ and clipping threshold $C$, and $D^t$ satisfies that
\begin{align}
D^t \gets \begin{cases}
              \mathbf{1}  & \text{if } t \ \mathrm{mod} \ 2s \leq s, \\
              \sqrt{v}+\epsilon & \text{otherwise.}
          \end{cases} 
\end{align}
Take expectation with respect to samples at the $t$-th iteration and $N^t$,
\begin{align}
    \mathbb{E}_t[F(w^{t+1})] &\leq F(w^t) - \alpha^t \left\langle \nabla F(w^t), \frac{\nabla F(w^t)}{D^t}\right\rangle + \frac{(\alpha^t)^2L}{2} \mathbb{E}_t\left[\left\|\frac{g^t}{D^t}\right\|^2\right] + \frac{d(\alpha^t)^2L}{2b^2} \sigma^2 C^2 \nonumber \\
    &= F(w^t)-\alpha^t \sum_{j\in [d]} \frac{(\nabla_j F(w^t))^2}{D_j^t} + \frac{(\alpha^t)^2L}{2} \sum_{j \in [d]} \frac{\mathbb{E}_t \left[(g_j^t)^2\right]}{(D_j^t)^2} + \frac{d(\alpha^t)^2L}{2b^2} \sigma^2 C^2, \label{eq:non-convex-generic}
\end{align}
where we have used the fact that $N^t$ is a zero-mean random variable independent of $w^t$, and $D^t$ is independent of samples at time $t$.
We need to consider two cases.
\begin{enumerate}[leftmargin=*]
\item \textbf{DP-SGD at the $t$-th iteration}

In this case, $D^t=\mathbf{1}$. Hence plugging in
\begin{align}
    \mathbb{E}_{t}\left[(g_j^t)^2\right] \leq \frac{\tau_j^2}{b} + \left(\nabla_j F(w^t)\right)^2,
\end{align}
we have
\begin{align}
     \mathbb{E}_t \left[F(w^{t+1})\right] \leq F(w^t) - \left(\alpha^t - \frac{(\alpha^t)^2L}{2}\right) \left\|\nabla F(w^t) \right\|^2 +  (\alpha^t)^2L \left(\frac{\tau^2}{2b}+\frac{\sigma^2 C^2 d}{2b^2}\right).
\end{align}
Under constant learning rate, let $\alpha^t = \alpha \leq \frac{1}{L}$,
\begin{align}
     \mathbb{E}_t \left[F(w^{t+1})\right] \leq F(w^t) - \frac{\alpha}{2} \|\nabla F(w^t)\|^2 +  (\alpha^t)^2L \left(\frac{\tau^2}{2b}+\frac{\sigma^2 C^2 d}{2b^2}\right).
\end{align}
Taking expectation on both sides gives
\begin{align}
    \frac{\alpha}{2} \mathbb{E}\left[\|\nabla F(w^t)\|_2^2 \right] \leq \mathbb{E}[F(w^t)] - \mathbb{E}[F(w^{t+1})] + (\alpha^t)^2L \left(\frac{\tau^2}{2b}+\frac{\sigma^2 C^2 d}{2b^2}\right).
\end{align}
\item \textbf{Private RMSProp at the $t$-th iteration}

We have
\begin{align}
    \mathbb{E}_t[F(w^{t+1})] 
    \leq F(w^t) - \alpha^t \sum_{j \in [d]} \frac{[\nabla F(w^t)]^2_j}{\sqrt{v^t_j} + \epsilon} + \frac{(\alpha^t)^2L}{2 \epsilon} \sum_{j \in [d]} \frac{\mathbb{E}_t[(g_j^t)^2]}{\sqrt{v_j^t}+\epsilon^t}  + \frac{d (\alpha^t)^2L \sigma^2 C^2}{2b^2}. 
\end{align}
Plugging in $\mathbb{E}_{t}\left[(g_j^t)^2\right] \leq \frac{\tau_j^2}{b} + \left(\nabla_j F(w^t)\right)^2 $
results in
\begin{align}
    &\mathbb{E}_t[F(w^{t+1})]  \\
    &\leq F(w^t) - \alpha^t \sum_{j \in [d]} \frac{[\nabla F(w^t)]^2_j}{\sqrt{v^t_j} + \epsilon} + \frac{(\alpha^t)^2L}{2 \epsilon} \sum_{j \in [d]} \frac{\sigma_j^2}{\left(\sqrt{v_j^t}+\epsilon\right)b} \nonumber \\
    &\quad +\frac{(\alpha^t)^2L}{2 \epsilon} \sum_{j \in [d]} \frac{[\nabla F(w^t)]_j^2}{\sqrt{v_j^t}+\epsilon} + \frac{d (\alpha^t)^2L \sigma^2 C^2}{2b^2} \\
    &= F(w^t) - \left(\alpha^t-\frac{(\alpha^t)^2L}{2\epsilon}\right) \sum_{j \in [d]} \frac{[\nabla F(w^t)]^2_j}{\sqrt{v^t_j} + \epsilon} + \frac{(\alpha^t)^2L}{2 \epsilon} \sum_{j \in [d]} \frac{\tau_j^2}{\left(\sqrt{v_j^t}+\epsilon\right)b} + \frac{d (\alpha^t)^2L \sigma^2 C^2}{2b^2} \\
    &\leq F(w^t) - \left(\alpha^t-\frac{(\alpha^t)^2L}{2\epsilon}\right) \sum_{j \in [d]} \frac{[\nabla F(w^t)]^2_j}{\sqrt{v^t_j} + \epsilon} + (\alpha^t)^2L \left(\frac{\tau^2}{2\epsilon^2 b} + \frac{d \sigma^2 C^2}{2b^2}\right).
\end{align}
Taking expectation on both sides yields 
\begin{align}
    \mathbb{E}[F(w^{t+1})] \leq \mathbb{E}[F(w^{t})] -  \left(\alpha^t-\frac{(\alpha^t)^2L}{2\epsilon}\right) \sum_{j \in [d]} \mathbb{E}\left[\frac{[\nabla F(w^t)]^2_j}{\sqrt{v^t_j} + \epsilon}\right] + (\alpha^t)^2L \left(\frac{\tau^2}{2\epsilon^2 b} + \frac{d \sigma^2 C^2}{2b^2}\right).
\end{align}
We need to lower bound $\sum_{j \in [d]} \mathbb{E} \left[\frac{[\nabla F(w^t)]^2_j}{\sqrt{v^t_j} + \epsilon}\right]$. 
We know from Holder's inequality that $\mathbb{E}[\langle u, v \rangle] \leq \mathbb{E}[\|u\|_1] \mathbb{E}[\|v\|_\infty]$.
Now note that 
\begin{align}
    \mathbb{E}\left[\|\nabla F(w^t)\|^2\right] =  \mathbb{E}\left[ \left\langle\frac{|\nabla F(w^t)|^2}{D^t}, D^t\right\rangle \right] 
    &\leq \mathbb{E}\left[\left\|\frac{(\nabla F(w^t))^2}{D^t}\right\|_1 \right] \mathbb{E}\left[ \|D^t\|_\infty\right] \\ 
    &\leq \mathbb{E}\left[\left\|\frac{(\nabla F(w^t))^2}{D^t}\right\|_1 \right]  (\sqrt{M}+\epsilon).
\end{align}
Hence
\begin{align}
    \sum_{j\in[d]}\mathbb{E}\left[\frac{(\nabla_j F(w^t))^2}{D_j^t}\right] \geq \frac{\mathbb{E}[\|\nabla F(w^t)\|^2]}{\sqrt{M}+\epsilon}
\end{align}
and
\begin{align}
     \mathbb{E}[F(w^{t+1})] \leq \mathbb{E}[F(w^{t})] -  \left(\alpha^t-\frac{(\alpha^t)^2L}{2\epsilon}\right) \frac{\mathbb{E}\left[\|\nabla F(w^t)\|^2\right]}{\sqrt{M}+\epsilon} + (\alpha^t)^2L \left(\frac{\tau^2}{2\epsilon^2 b} + \frac{d \sigma^2 C^2}{2b^2}\right).
\end{align}
Let $\alpha^t = \alpha \leq \frac{\epsilon}{L}$, we obtain
\begin{align}
    \mathbb{E}[F(w^{t+1})] \leq \mathbb{E}[F(w^{t})] -  \frac{\alpha}{2 (\sqrt{M}+\epsilon)}  \mathbb{E}\left[\|\nabla F(w^t)\|^2\right] + (\alpha^t)^2L \left(\frac{\tau^2}{2\epsilon^2 b} + \frac{d \sigma^2 C^2}{2b^2}\right).
\end{align}
\end{enumerate}
Combining the two cases, for any $t$, we have
\begin{align}
    &\mathbb{E}[\|\nabla F(w^t)\|^2] \\ &\leq \frac{2(\sqrt{M}+1)}{\alpha} \left(\mathbb{E}[F(w^{t})] - \mathbb{E}[F(w^{t+1})] \right) + 2\alpha L(\sqrt{M}+1) \left(\frac{\tau^2}{2\epsilon^2 b} + \frac{d \sigma^2 C^2}{2b^2}\right).
\end{align}
Taking a telescope sum results in
\begin{align}
    \frac{1}{T}\sum_{t=1}^T \mathbb{E}[\|\nabla F(w^t)\|^2] \leq \frac{2(\sqrt{M}+1)F(w^1) }{\alpha T} + 2\alpha L(\sqrt{M}+1) \left(\frac{\tau^2}{2\epsilon^2 b} + \frac{d \sigma^2 C^2}{2b^2}\right),
\end{align}
where $M := C^2+\frac{\sigma^2 C^2}{sb^2}$.

\clearpage

\section{Experimental Details and Additional Results} \label{app:additional_results}

\subsection{Datasets} \label{app:datasets}

\textbf{IMDB}~\citep{maas2011learning} is a binary classification dataset on sentiment analysis for movie reviews that includes 25,000/25,000 training/test samples. Each sample is a review under a vocabulary size of 10,000. We train a logistic regression model with 10,001 parameters.

\textbf{StackOverflow}~\citep{stackoverflow-tff, stackoverflow-kaggle} is a large-scale text dataset containing questions and answers from Stack Overflow. We focus on the task of classifying the tag(s) of a given sentence described in~\cite{stackoverflow-tff}, though we focus on the usual centralized training setting instead of a federated setting. We randomly sample 246,092 sentences for training and 61,719 for testing, where each sentence is described by 10,000 features. We format the task as a 500-class classification problem, and the resulting model has roughly 5 million parameters.

\textbf{MovieLens-100k}~\citep{10.1145/2827872} is a movie review dataset commonly used for recommendation systems. It contains 100,000 movie ratings from 943 users on 1,682 items ($\approx 6\%$ non-zero entries).
We study a (non-convex) matrix factorization task with embedding size 100, thus totaling 262,500 parameters. We treat each non-zero entry as a `record'  for differential privacy, and randomly partition them for training and evaluation.

\subsection{Hyperparameters}  \label{supp:sec:hyperparams}

Unless otherwise stated, we fix the following hyperparameters in our experiments: for IMDB, StackOverflow, and MovieLens respectively, we train for 100/50/50 epochs with batch size 64 and privacy $\delta = 10^{-5}$/$10^{-6}$/$10^{-6}$. We then perform a grid search on other hyperparameters:
\begin{itemize}[leftmargin=*,itemsep=5pt,parsep=0pt,topsep=0pt,partopsep=0pt]
    \item \textit{Learning rates}: We grid search over \{0.03, 0.1, 0.3, 1, 3, 5\} for SGD / AdaGrad update rules and from \{0.001, 0.003, 0.01, 0.03, 0.1, 0.3, 1, 3\} for the RMSProp update rule.
    \item \textit{Per-example clipping thresholds}: We grid search over \{0.1, 0.25, 0.5, 1\} when performing per-example clipping on clean gradients \textit{without preconditioning} (e.g.\ for DP-SGD updates), and over \{0.1, 0.25, 0.5, 1, 2, 3, 5\} when clipping \textit{preconditioned} clean gradients (e.g.\ for \name updates in adaptive iterations). The rationale is that, in general, the preconditioned gradient norms are usually larger than those without preconditioning (recall from Section~\ref{sec:methods:1} that we apply preconditioning \textit{before} privatization in \name). For AdaDPS and \name-RMSProp, we also tried a few values of even larger clip thresholds ($\ge$ 10) though we did not perform a full sweep for other hyperparameters at those values due to computational constraints.
    \item \textit{Delay parameter $s$}: For all datasets, $s$ (i.e., the number of optimization steps) is chosen heuristically as a function of the number of steps in an epoch. When reporting the best results (e.g.\ Figure~\ref{fig:baselines}, Figure~\ref{fig:privacy-utility-tradeoff}), we search over $s \in $ \{195, 390, 780\} (roughly 0.5, 1, 2 epochs respectively) for IMDB (390 steps/epoch); $s \in $ \{100, 300, 1000, 3000\} for StackOverflow (3845 steps/epoch); and $s \in $ \{1250, 15625, 31250, 50000\} for MovieLens (1250 steps/epoch). 
    \item \textit{Adaptivity $\epsilon$}: In our settings, the adaptivity parameter $\epsilon$ for RMSProp/AdaGrad (in the denominator $D^t = \sqrt{v} +\epsilon$) would affect the amount of adaptivity as well as the norms of preconditioned gradients, which may in turn influence the privacy-utility trade-off under per-example clipping. We tune $\epsilon$ over a small grid of \{$10^{-2}, 10^{-3}, 10^{-5}, 10^{-7}$\}.
\end{itemize}
All reported results use the best hyperparameter configurations, which are selected using training set metrics (as overfitting generally does not occur under DP noise). To facilitate reproducibility, we summarize the tuned hyperparameters for the main experiments and the ablation studies in Table~\ref{supp:table:hyperparameters} and Table~\ref{supp:table:hyperparameters-ablations} below respectively.

\setlength{\tabcolsep}{5pt}
\begin{table}[h!]
    \centering
    {
    \scalebox{0.88}{
    \begin{tabular}{@{}l  ll   ll  l} 
    \toprule[\heavyrulewidth]
        \multirow{2}{*}{\textbf{Dataset}} & \multirow{2}{*}{\textbf{DP-SGD}} & \multirow{2}{*}{\textbf{DP-RMSProp}} & \multirow{2}{*}{\textbf{PDA-DPMD}} & \textbf{AdaDPS}  & \multirow{2}{*}{\textbf{\name-RMSProp}} \\
         &  &  &  & (w/ RMSProp)  &  \\
        \midrule
        IMDB  & (\textbf{5}, 0.5) &  (0.3, \textbf{0.1}, 10\textsuperscript{-3})  &  (\textbf{5}, 0.5)  & (1, \textbf{5}, 10\textsuperscript{-3})  & (0.1, \textbf{3}, 0.5, \textbf{5}, \textbf{10\textsuperscript{-7}}, \textbf{195}) \\
        StackOverflow  & (3, 0.25) & (0.03, \textbf{0.1}, 10\textsuperscript{-3}) &  (3, 0.25)  &  (0.4, \textbf{5}, 10\textsuperscript{-3})  &  (0.3, 0.3, 0.25, \textbf{5}, 10\textsuperscript{-5}, 1000) \\
        MovieLens & (0.1, \textbf{1})  &  (\textbf{0.001}, 0.5, 10\textsuperscript{-3})  &  (0.1, \textbf{1}) & (0.01, \textbf{10}, \textbf{10\textsuperscript{-2}})  &   (0.1, 0.03, \textbf{1}, \textbf{5}, 10\textsuperscript{-3}, 31250) \\
        \bottomrule
    \end{tabular}}
    \caption{
    \textbf{Tuned hyperparameters for different methods across three datasets}. For DP-SGD and PDA-DPMD, the values refer to (LR, clip); for DP-RMSProp and AdaDPS, the values refer to (LR, clip, adaptivity $\epsilon$); and for \name, the values refer to (LR for SGD iters, LR for RMSProp iters, clip for SGD iters, clip for RMSProp iters, adaptivity $\epsilon$, delay $s$). \textbf{Bold values} were experimented on the edges of the hyperparameter grids.
    }
    \label{supp:table:hyperparameters}
    }
\end{table}

\begin{table}[h!]
\centering
\scalebox{0.99}{
\begin{tabular}{lcc}
\toprule[\heavyrulewidth]
\textbf{Dataset} &  \textbf{Ablation Variant1} & \textbf{Ablation Variant 2} \\
\midrule
 IMDB & (3.0, 0.1, 0.5, 2.0, \textbf{10\textsuperscript{-7}}, \textbf{780})  & (0.3, 0.3, 0.25, 10\textsuperscript{-3}, \textbf{780})   \\
 StackOverflow & (1.0, 1.0, 1.0, 1.0, {10\textsuperscript{-5}}, \textbf{1000})  & (0.3, \textbf{0.001}, 0.25, 10\textsuperscript{-5}, \textbf{1000})  \\
\bottomrule
\end{tabular}}
\caption{
    \textbf{Tuned hyperparameters for ablation studies (\cref{sec:experiments:ablation}) on IMDB and StackOverflow}. Both variants use the RMSProp update rule for the adaptive steps. 
    \textbf{Bold values} were experimented on the edges of the hyperparameter grids.
    For Variant 1 and 2 respectively, the values refer to (LR for SGD iters, LR for RMSProp iters, clip for SGD iters, clip for RMSProp iters, adaptivity $\epsilon$, delay $s$) and (LR for SGD iters, LR for RMSProp iters, clip for both SGD/RMSProp iters, adaptivity $\epsilon$, delay $s$).
    Note that for Variant 2 the clipping threshold do not need to be tuned separately for SGD/RMSProp iters as it applies to preconditioned gradients in both cases. 
}
\label{supp:table:hyperparameters-ablations}
\end{table}

\newpage
\subsection{Results for \texorpdfstring{\name}{DP2}-AdaGrad}  \label{supp:sec:adagrad}

The \name framework can be applied to a range of adaptive methods beyond RMSProp mostly discussed in the main text. We extend \name to the AdaGrad update rule (with only one line of code change, see Section~\ref{supp:sec:algorithms}), and benchmark its convergence and privacy-utility trade-offs. 
In Figure~\ref{supp:fig:convergence} and Figure~\ref{supp:fig:tradeoffs}, the results indicate that \name-AdaGrad, like \name-RMSProp, can consistently and substantially improve over the baselines in terms of both convergence and absolution performance, demonstrating the generality of \name to other adaptive optimizers.

\begin{figure}[h]
  \centering
  \includegraphics[width=0.9\linewidth]{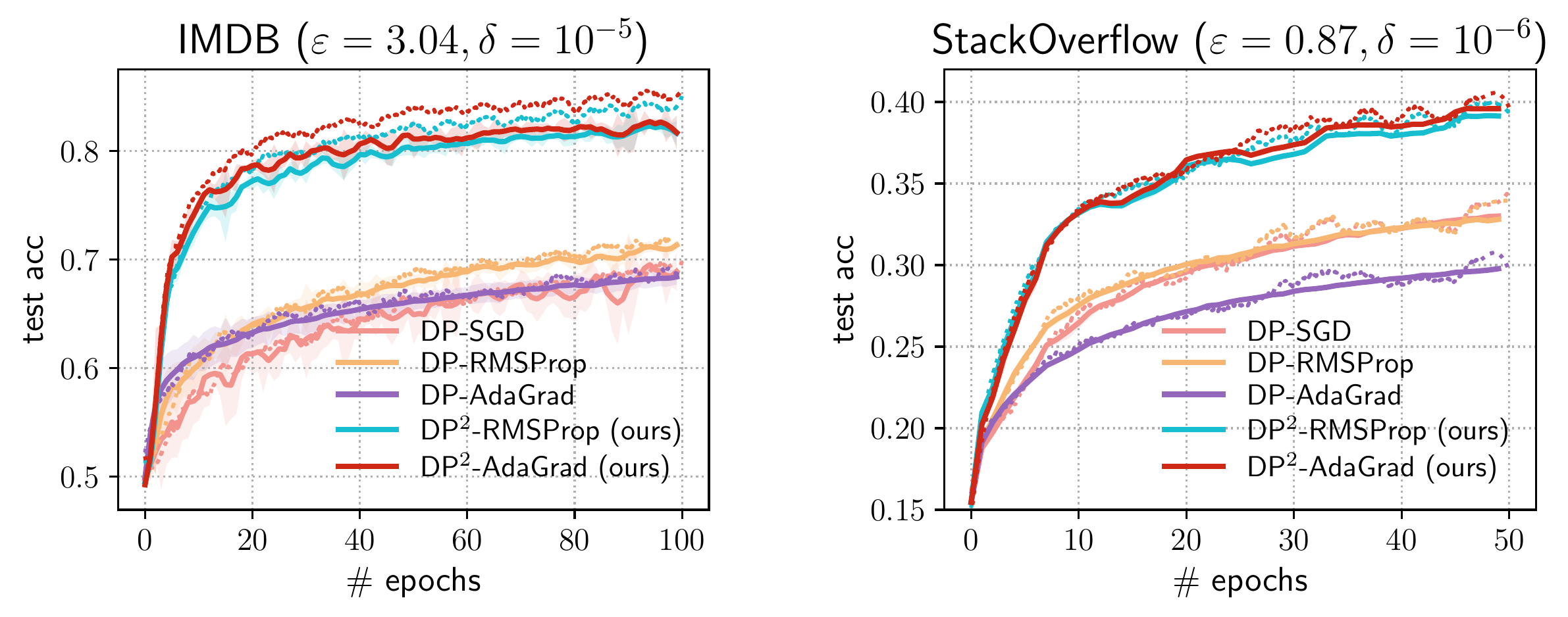}
  \caption{
    (\textbf{Extension of Figure~\ref{fig:baselines} to the AdaGrad update rule}) Test accuracy of \name compared to DP-SGD, DP-RMSProp, and DP-AdaGrad on {IMDB} and {StackOverflow}.  Dotted lines denote training performance.
  }
  \label{supp:fig:convergence}
\end{figure}

\subsection{Effects of Increasing Computational Budgets}  \label{supp:sec:increasing-compute}

When differential privacy introduces a large utility gap between private and non-private training, one approach to improving the privacy-utility trade-off is to increase computational costs by using larger batch sizes under fixed numbers of steps.
The noise multiplier needs to increase to achieve the same privacy target, while the overall privacy noise may still be reduced due to the larger batch size.
This technique may be adopted in practice when we want to prioritize the utility of private optimization under fixed privacy budgets.
In Figure~\ref{supp:fig:tradeoffs}~(right), we explore the effect of such increased computation on StackOverflow. 
With a 4$\times$ factor increase in computational cost (4$\times$ larger batch sizes with the same number of training iterations), we observe that the privacy/utility trade-off of all methods can be substantially improved, narrowing the utility gap to non-private training. 
In particular, observe that the absolute performance improvement of \name over the vanilla DP baselines remains similar.

\begin{figure}[h]
  \centering
  \includegraphics[width=\linewidth]{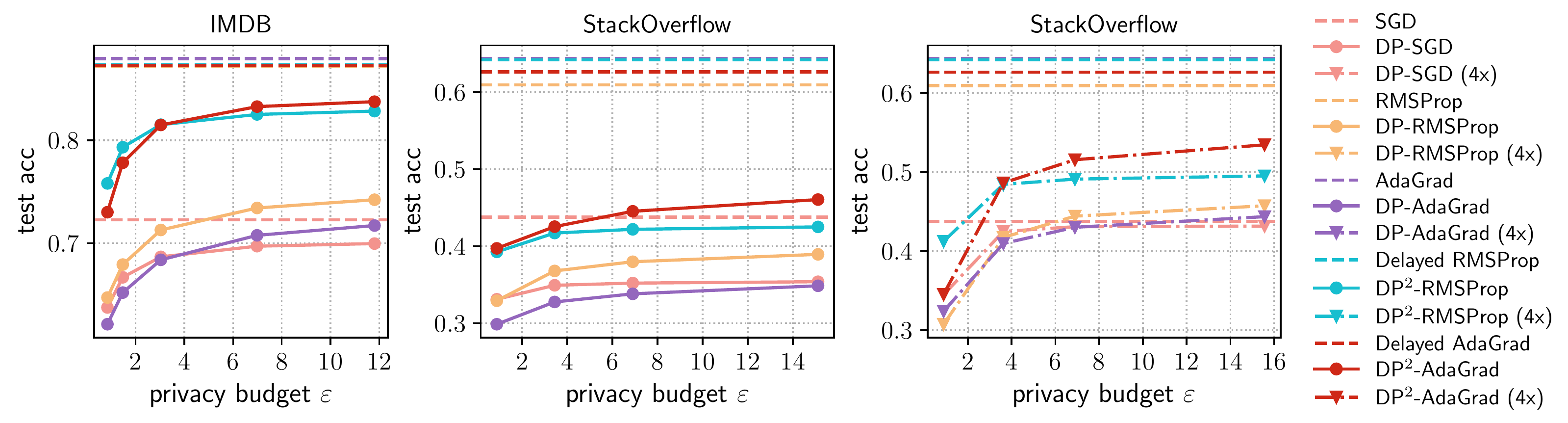}
  \vspace{-1em}
  \caption{
    (\textbf{Extension of Figure~\ref{fig:privacy-utility-tradeoff} to the AdaGrad update rule and increased computational cost}) Privacy/utility trade-offs of \name compared to DP-SGD, DP-RMSProp, and DP-AdaGrad on {IMDB} and {StackOverflow}.  
    ``(4$\times$)'' denotes increasing the batch size and the number of epochs simultaneously by a factor of 4 and picking the appropriate noise multiplier to arrive at similar privacy costs $(\varepsilon)$.
  }
  \label{supp:fig:tradeoffs}
\end{figure}

\subsection{Additional Results for Ablation Studies}  \label{supp:sec:ablations}

Table~\ref{supp:table:ablations} summarizes the results for ablation studies on IMDB, StackOverflow, and MovieLens, and Figure~\ref{supp:fig:ablations} reports test accuracies  on IMDB and StackOverflow during optimization. The variants are discussed in Section~\ref{sec:experiments:ablation} and complete algorithms are presented in Appendix~\ref{supp:sec:algorithms}.
We observe that \name indeed consistently outperforms the two (weaker) variants on all datasets, thus verifying our design choices for \name.
In particular, note that the utility drop of variant 2 (adding noise before preconditioning) on StackOverflow is more significant compared to that on IMDB; we argue that this is due to StackOverflow being a high-dimensional learning task (roughly 5 million model parameters) and thus the detrimental effect of preconditioning per-coordinate noise is larger.

\begin{table}[h!]
\centering
\scalebox{0.99}{
\begin{tabular}{lccc}
\toprule[\heavyrulewidth]
\textbf{Dataset} &  \textbf{Variant1} & \textbf{Variant 2} & \textbf{\name-RMSProp}  \\
\midrule
 IMDB $\boldsymbol{\uparrow}$ & .799 $\pm$ .006 & .643 $\pm$ .007  & \textbf{.815} $\pm$ .011  \\
 StackOverflow $\boldsymbol{\uparrow}$ & .382 $\pm$ .002  &  .265 $\pm$ .004 &  \textbf{.391} $\pm$ .001 \\
 MovieLens $\boldsymbol{\downarrow}$  & 3.32 $\pm$ .088 & 3.18 $\pm$ .066 &  \textbf{2.78} $\pm$ .054 \\
\bottomrule
\end{tabular}}
\caption{Summary of ablation studies on all three datasets.}
\label{supp:table:ablations}
\end{table}

\begin{figure}[h!]
  \centering
  \includegraphics[width=0.46\linewidth]{figs/imdb_ablations.pdf}
  \includegraphics[width=0.478\linewidth]{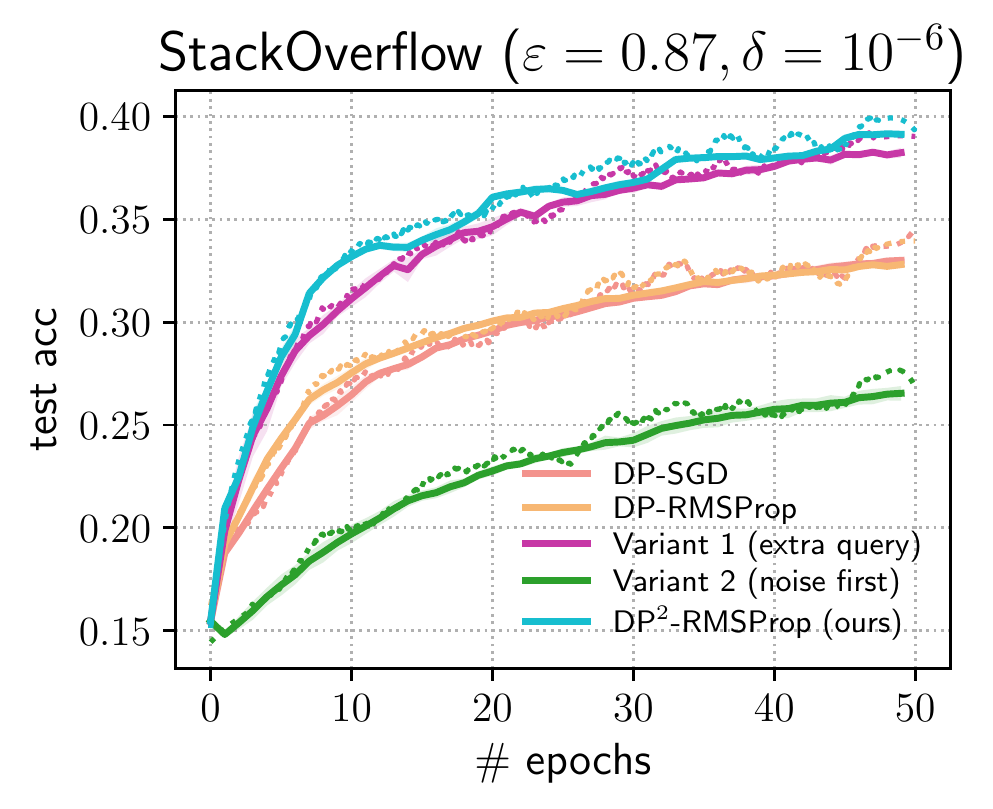}
  \vspace{-1em}
  \caption{Test accuracies for ablation studies on \name. Dotted lines correspond to training metrics.
  }
  \label{supp:fig:ablations}
\end{figure}

\subsection{Additional Results for Comparison with Public Data-Assisted Methods}  \label{supp:sec:public}

Figure~\ref{supp:fig:recent-methods} extends the results in Section~\ref{sec:experiments:baselines} with convergence plots on IMDB and StackOverflow. On IMDB, we observe that despite not using any auxiliary information, the convergence of \name-RMSProp is comparable with that of AdaDPS-RMSProp~\citep{li2022private} which uses 1\% of training data as the public data (250 examples) to approximate the preconditioner. 
On StackOverflow where the same public split of 1\% corresponds to 2460 examples, we observe that AdaDPS-RMSProp can outperform \name. On the other hand, the extra public data do not help PDA-DPMD outperform \name.

\begin{figure}[h!]
  \centering
  \includegraphics[width=0.46\linewidth]{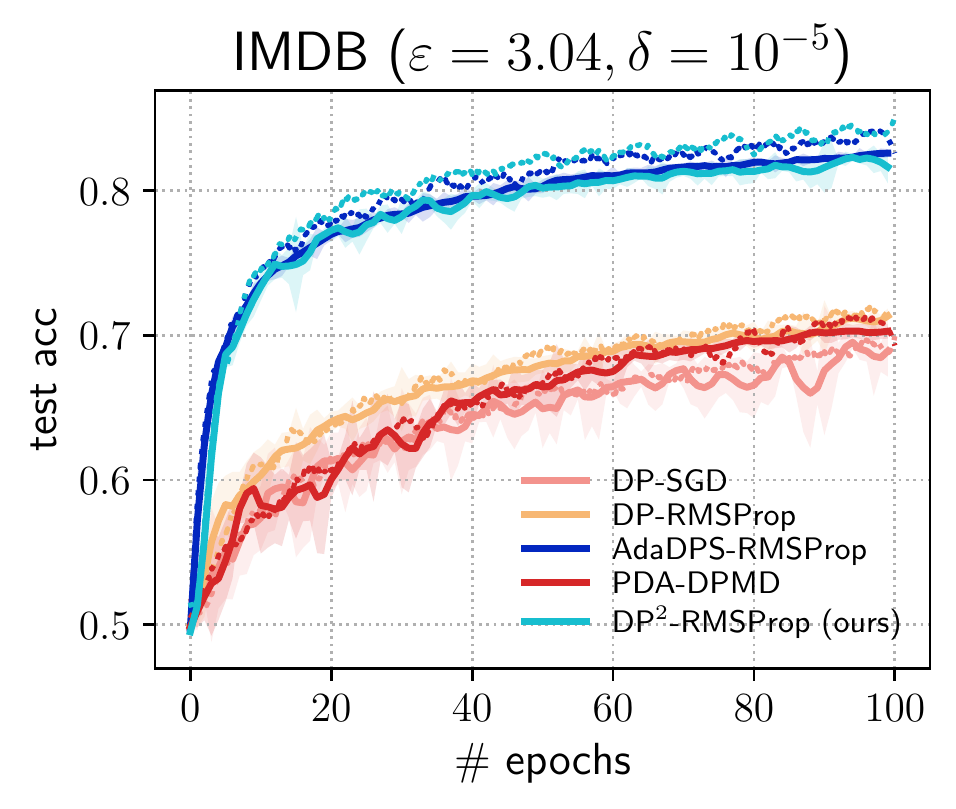}
  \includegraphics[width=0.478\linewidth]{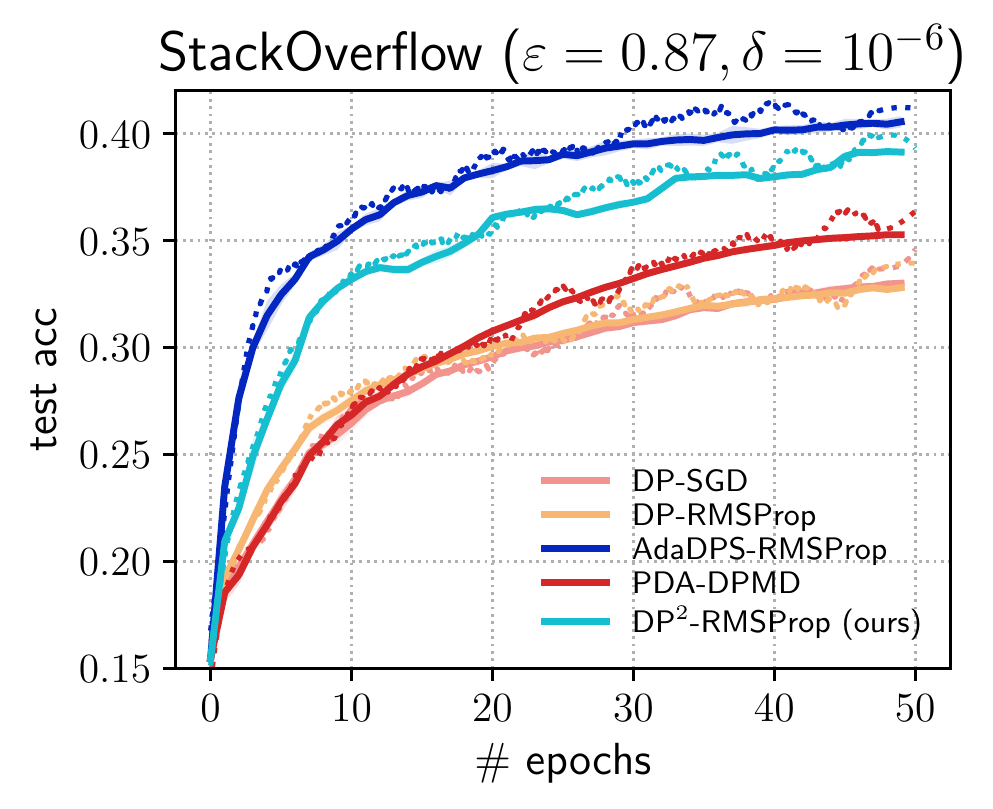}
  \vspace{-1em}
  \caption{Test accuracies of \name compared against recent private (adaptive) methods that leverage public data~\citep{li2022private,amid2021public}. Dotted lines correspond to training metrics.
  }
  \label{supp:fig:recent-methods}
\end{figure}

\begin{figure}[h]
  \centering
  \includegraphics[width=0.45\linewidth]{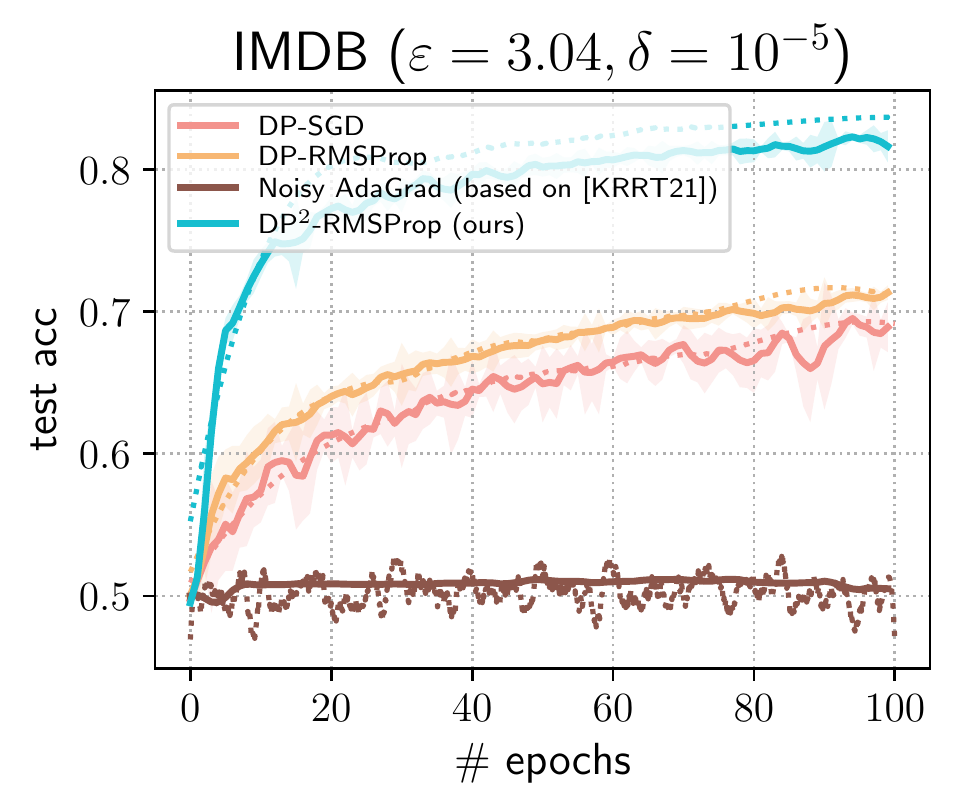}
  \caption{
    Comparing \name against a noisy AdaGrad variant based on \cite{kairouz2021nearly} where the gradients and the preconditioner are privatized separately.
  }
  \label{supp:fig:imdb-krrt21}
\end{figure}

In Figure~\ref{supp:fig:imdb-krrt21}, we additionally implement a private AdaGrad method proposed in~\cite{kairouz2021nearly} that also leverages public data. Specifically, in each iteration, the algorithm clips and adds independent noise to both the clean gradients and the preconditioner estimated using clean gradients; it then uses public data to estimate a gradient subspace onto which to project the clipped/noised preconditioner in order to reduce the effect of noise; finally, it preconditions the noisy gradient with the noisy preconditioner and takes an update step. Our implementation differs from \cite{kairouz2021nearly} in that we use the diagonal form of the preconditioner instead of the full matrix form.
To estimate the gradient subspace, we follow the approach described in \cite{zhou2020bypassing} where the projection matrix $V \in \mathbb R^{d\times k}$ where $d$ is the number of parameters and $k$ is the dimension of the subspace is obtained by taking the top-$k$ eigenspace of $M^t$ with
\[
    M^t=\frac{1}{|X_\text{pub}|} \sum_{x^i \in X_\text{pub}} \nabla_{w^t} f\left(x^i; w^t\right) \nabla_{w^t} f\left(x^i; w^t\right)^{\top}
\]
where $X_\text{pub}$ is the set of public examples.
Unfortunately, we have not obtained a satisfactory result for this noisy AdaGrad algorithm. 
We remark that since the method is extremely computationally expensive (involves computing the eigendecomposition of a $d\times d$ matrix with $d=10001$ at every iteration), further hyperparameter tuning may help improve the performance. However, our ablation studies (Section~\ref{sec:experiments:ablation} and Appendix~\ref{supp:sec:ablations}) may shed light on the current observations since this method privatizes gradients before preconditioning.

\section{Algorithms}  \label{supp:sec:algorithms}

For completeness, we present all algorithms mentioned in the main text in detail. 
\begin{itemize}[leftmargin=*]
    \item \textbf{Non-private version of \name}:  only changing Line 9 in Algorithm~\ref{alg:dp2} to
    \begin{align*}
            \Tilde{g}^t \gets \frac{1}{b} \sum_{i \in B}  \frac{g^{i,t}}{D^t}   
    \end{align*}
    \item \textbf{\name with the AdaGrad update rule (\name-AdaGrad)}: only changing Line 5 in Algorithm~\ref{alg:dp2} to
        \begin{align*}
             v \gets v + \left(G^t/s_1\right)^2
        \end{align*}
    
    \item \textbf{\name with Yogi's additive update rule (\name-Yogi)}: only changing Line 5 in Algorithm~\ref{alg:dp2} to
        \begin{align*}
             v \gets v + (1-\beta)
    \text{sign}(G^t/s_1 - v^2) \left(G^t/s_1\right)^2
        \end{align*}

    \item \textbf{Ablation variant 1 (extra query) with delayed preconditioners}: see Algorithm~\ref{alg:dp2-ablation-variant1}. Observe that the clean batch gradients $\{g^{i, t}\}_{i \in B}$ get privatized twice in most iterations (when $(t-1) \bmod s \neq 0$), increasing the total privacy cost.
    \item \textbf{Ablation variant 2 (noise before preconditioning) with delayed preconditioners}: in Line 9 of Figure~\ref{alg:dp2}, privatize the batch gradients with the following replacement:
        \begin{align*}
            \tilde{g}^t \gets \frac{1}{b} \left(\sum_{i \in B} \text{clip} \left(g^{i,t}, C\right)  +  \mathcal{N}\left(\mathbf{0}, \sigma^2 C^2\right)\right) / D^t
        \end{align*}
\end{itemize}

\begin{algorithm}[h]
\SetAlgoLined
\DontPrintSemicolon
\SetNoFillComment
\KwIn{$T$, batch size $b$, noise multiplier $\sigma$, clipping thresholds $C_1$, $C_2$, initial model $w^0 \in \mathbb{R}^d$, $v=\mathbf{0}$, constant $\epsilon \in \mathbb{R}_+$, learning rate schedule $\alpha^t$, moving average parameters $\beta$, delay steps $s$}
\caption{{Ablation variant 1 (extra query) using delayed preconditioners}}
\label{alg:dp2-ablation-variant1}
    Set accumulator $G^0 \gets \mathbf{0}$
    
    \For{$t=1, \cdots, T$}{
        Uniformly randomly sample a mini-batch $B$ with size $b$ from private training data \; 
        Get individual gradients for sample $i \in B$:
            $g^{i,t} \gets \nabla f(x^i; w^{t-1})$ \;
        Privatize the gradients using the Gaussian mechanism:
        \begin{align*}
            \Tilde{g}^{t} \gets \frac{1}{b} \left(\sum_{i \in B} \text{clip} \left(g^{i,t}, C_1\right)  +  \mathcal{N}\left(\mathbf{0}, \sigma^2 C_1^2\right)\right)
        \end{align*}
        Accumulate the private gradients $\tilde{g}^t$ : 
        $G^{t} \gets G^{t-1} + \tilde{g}^t $ \;

        \If{$ (t-1) \ \mathrm{mod} \ s = 0$}{
            Update moment estimates:
            $v \gets \beta v + (1-\beta) \left(G^t/s\right)^2$ \;
            Reset accumulator: $G^t \gets \mathbf{0}$ \;
            \textbf{Set final gradient}:  $\bar{g}^t \gets \tilde{g}^t$
        }
        \Else{
            Privatize the \textbf{clean, preconditioned} gradients using the Gaussian mechanism:
            \begin{align*}
                \hat{g}^t \gets \frac{1}{b} \left(\sum_{i \in B} \text{clip} \left(\frac{g^{i,t}}{\sqrt{v}+\epsilon}, C_2\right)  +  \mathcal{N}\left(\mathbf{0}, \sigma^2 C_2^2\right)\right)
            \end{align*}
            \textbf{Set final gradient}:  $\bar{g}^t \gets \hat{g}^t$
        }
        Update model parameters $w$:
        \begin{align*}
           w^{t} \gets w^{t-1} - \alpha^t \bar{g}^t
        \end{align*}
        }
    \Return{$w^T$} 
\end{algorithm}

\end{document}